\documentclass[12pt]{article}

\usepackage[margin=1in]{geometry}
\usepackage{times}
\usepackage{amsmath,amssymb,amsthm}
\usepackage{mathtools}
\usepackage{bm}
\usepackage[inline]{enumitem}
\usepackage{booktabs,comment}
\usepackage{hyperref}
\usepackage[capitalise,noabbrev]{cleveref}
\usepackage{xcolor}
\usepackage{thmtools}
\usepackage{tikz}
\usetikzlibrary{arrows.meta, positioning, decorations.pathmorphing}
\usepackage{natbib}
\usepackage{enumitem}
\usepackage{microtype}
\usepackage{placeins}
\usepackage{setspace}

\newtheorem{theorem}{Theorem}[section]
\newtheorem{proposition}[theorem]{Proposition}

\newtheorem{corollary}[theorem]{Corollary}
\newtheorem{definition}[theorem]{Definition}
\newtheorem{remark}[theorem]{Remark}

\newcommand{\diam}{\operatorname{diam}}
\newcommand{\R}{\mathbb{R}}

\newcommand{\E}{\mathbb{E}}

\newcommand{\cP}{\mathcal{P}}
\newcommand{\cX}{\mathcal{X}}
\newcommand{\cZ}{\mathcal{Z}}
\newcommand{\cC}{\mathcal{C}}
\newcommand{\cH}{\mathcal{H}}

\newcommand{\cM}{\mathcal{M}}

\newcommand{\cY}{\mathcal{Y}}
\newcommand{\cL}{\mathcal{L}}

\newcommand{\width}{\mathrm{width}}

\newcommand{\trap}{\mathrm{trap}}
\newcommand{\funnel}{\mathrm{funnel}}
\newcommand{\cost}{\mathrm{cost}}

\title{The Metric Slingshot: Navigational Reuse as Width-Optimal\\
Structural Decoupling in Continual Learning}

\author{Xin Li\\
  Department of Computer Science\\
  University at Albany, SUNY\\
  Albany, NY 12222, USA\\
  \texttt{xli48@albany.edu}
}
\date{}

\begin{document}
\doublespacing
\maketitle

\begin{abstract}
The mammalian brain, most extensively studied in rodents and bats, solves an enormous variety of non-spatial cognitive tasks using neural circuitry, including grid cells, place cells, and hippocampal indexing, that originally evolved for physical navigation.
We formalize the above observation within the local Urysohn width (LUW) framework for continual learning.
The central construct is the \emph{metric slingshot}: a learned embedding $\phi: \cX \to \cZ$ that maps an arbitrary learning problem into a navigational latent space $\cZ$ where pre-evolved contraction maps (grid cells) already provide the metric machinery, so that only the topological indexing subproblem must be solved de novo.
We prove three results.
First, the optimal spacing of multi-scale grid cell modules is a geometric series whose ratio is determined by the $\Omega(w \log w)$ sample complexity bound of the LUW framework; for ecologically plausible parameters, optimality yields $r^* \approx 1.4$--$1.7$, matching electrophysiological measurements in rodent medial entorhinal cortex. Second, the slingshot preserves the width hierarchy with a Lipschitz-controlled transfer bound: a contractive embedding into a fine-resolution navigational space reduces the effective number of contexts the learner must discover.
Third, the anatomical separation of the ventral (``what'') and dorsal (``where'') visual streams achieves the structural decoupling required by Metric-Topology Factorization (MTF) \emph{by architecture}, without gradient-routing mechanisms. We demonstrate that such metric slingshot is applicable to both perception cognition and motor control. 
Together, these results provide a unified, complexity-theoretic account of navigation in non-spatial domain, grid cell multi-modularity, and hippocampal-neocortical complementary learning as consequences of a single exaptation principle: metric slingshot.

\medskip
\noindent\textbf{Keywords:} grid/place cells, continual learning, local Urysohn width, metric-topology factorization (MTF), metric slingshot, structural decoupling, what-to-where conversion, action-to-navigation
\end{abstract}

% ===========================================================================
\section{Introduction}
\label{sec:intro}
% ===========================================================================

A striking feature of mammalian cognition is that neural machinery originally evolved for spatial navigation is reused, essentially intact, for tasks that have nothing to do with physical space.
Grid cells in the medial entorhinal cortex (MEC), first characterized as encoding periodic spatial position \citep{hafting2005microstructure}, fire in grid-like patterns during abstract conceptual tasks \citep{constantinescu2016organizing}.
Hippocampal place cells, the hallmark of the cognitive map \citep{okeefe1971hippocampus}, respond to non-spatial dimensions such as sound frequency and elapsed time \citep{aronov2017mapping,pastalkova2008internally}.
Episodic memory, conceptual categorization, and even social reasoning appear to piggyback on the same hippocampal-entorhinal circuit that supports navigation \citep{epstein2017cognitive,schafer2018navigating}.
The dominant explanation, that spatial cognition is just a special case of relational cognition, restates the observation without explaining the mechanism.
What is it about navigational circuitry that makes it useful for non-spatial tasks?
And why should a system optimized for encoding Euclidean geometry in two dimensions serve as a general-purpose computational substrate?

In this paper, we provide an affirmative answer grounded in the \emph{local Urysohn width} (LUW) framework for continual learning.
The LUW framework establishes that the generalization bound for a $K$-context continual learner has a structural precondition: the number of contexts $K$ must equal or exceed the \emph{width} $w$ of the learning problem, a topological invariant that counts the minimum number of contractible basins needed to cover the problem manifold.
The associated Metric-Topology Factorization (MTF) separates inference into two subproblems: the \emph{trap} (which basin am I in?) and the \emph{funnel} (how do I reach the attractor within that basin?), with a strict requirement that these two subsystems not share gradients.
Our central construct is the \textbf{metric slingshot}: a learned embedding $\phi: \cX \to \cZ$ that maps an arbitrary learning problem into a \emph{navigational latent space} $\cZ$ where pre-evolved contraction maps (grid cells) already provide the funnel, so that only the trap must be solved from scratch.
The slingshot converts one hard problem, ``learn a contraction map for a novel task'', into three easier problems: learn an embedding ($\phi$), learn a topological index ($\Sigma$), and learn a readout ($\pi$), all operating on independent objectives and timescales.
The expensive contraction map itself is free, amortized over evolutionary time. In summary, we prove three main results:

\begin{enumerate}[leftmargin=2em]

\item \textbf{Grid cell spacing from the width scaling law} (\Cref{thm:grid-spacing}).
The optimal allocation of multi-scale grid modules, under the $\Omega(w \log w)$ sample complexity cost, is a geometric series with ratio $r^* = R^{1/(M-1)}$.
For a rodent navigating scales from body-length to territory diameter, the scaling law predicts $r^* \approx 1.4$--$1.7$, matching electrophysiological observations \citep{stensola2012entorhinal,barry2007experience}.
To our knowledge, the first derivation of grid cell multi-modularity from a complexity-theoretic principle rather than from spatial coding efficiency \citep{wei2015principle,mathis2012optimal}.

\item \textbf{Width transfer under embedding} (\Cref{prop:width-transfer}).
The metric slingshot preserves the width hierarchy with Lipschitz-controlled bounds: a contractive embedding into a fine-resolution navigational space can reduce the effective width and the sample complexity of the embedded problem. We demonstrate the generalization property of the slingshot principle is consistent with exaptation in evolution \cite{gould1982exaptation}.
 
\item \textbf{Structural decoupling by architecture} (\Cref{prop:decoupling}).
The anatomical separation of the ventral and dorsal streams, converging in the hippocampal formation, achieves the MTF decoupling requirement \emph{by construction}, without orthogonal projectors or stop-gradient operations.
The indexer $\Sigma$, the embedding $\phi$, and the readout $\pi$ are trained by separate objectives, eliminating the circular dependency between context identification and metric learning that plagues monolithic continual learners.
\end{enumerate}

The payoff is a unified account: grid cell multi-modularity, dentate gyrus sparsity \citep{jung1993comparison}, two-timescale complementary learning \citep{mcclelland1995complementary}, and coarse-to-fine hippocampal replay \citep{diba2007forward} all emerge as consequences of a single information-theoretic quantity: the structural term in the continual learning generalization bound.

The paper is organized as follows.
\Cref{sec:background} reviews the LUW framework and MTF in sufficient detail to make the paper self-contained.
\Cref{sec:evolution} argues that navigation is the canonical width problem and derives the evolutionary ordering of the slingshot components.
\Cref{sec:grid} derives the optimal multi-scale grid structure from the width scaling law - the most direct technical consequence of the navigational slingshot.
\Cref{sec:slingshot} defines the metric slingshot, proves the width transfer bound, and analyzes the learning decomposition.
\Cref{sec:decoupling} shows that the ventral-dorsal anatomical separation implements MTF decoupling by architecture.
\Cref{sec:motor} extends the what-to-where framework from perception cognition to motor control, exemplifying the second application of metric slingshot.
%\Cref{sec:predictions} states the empirical predictions of the theory.
%\Cref{sec:related} reviews related work.
\Cref{sec:conclusion} discusses limitations and open questions.

% ===========================================================================
\section{Background: Local Urysohn Width and Metric-Topology Factorization}
\label{sec:background}
% ===========================================================================

%We summarize the elements of the LUW framework needed for this paper; for a full treatment, see \citet{li2026luw}.

\paragraph{Width of a Learning Problem}
We introduce \emph{local Urysohn width} (LUW), a complexity measure for learning problems on metric spaces.
The adjective \emph{local} refers to the fact that width is defined by covers of locally contractible, diameter-bounded cells (note that it measures the minimum number of patches with well-behaved local geometry, not any global topological property of $\cX$).
Unlike VC dimension, which characterizes the richness of a hypothesis \emph{class} $\cH$ (e.g., the number of dichotomies an $n$-point set can shatter \cite{vapnik2013nature}), width characterizes the topological-geometric complexity of the \emph{problem} $\cP$ itself.
The two measures are incomparable: a problem can have large width with bounded VC dimension, or unit width with unbounded VC dimension (we have rigorously proved the VC-width separation result in \citet{li2026luw}).
More importantly, the topology of $\cX$ alone does not determine $w$: even when $\cX \simeq \R^n$ (contractible, trivially $\beta_k = 0$ for all $k \geq 1$), the \emph{target map} $f: \cX \to \cY$ can impose multiple structurally incompatible regimes - e.g., a parity function or a multi-modal density, that require $w > 1$ context cells, because no single $\gamma$-contractive map can faithfully approximate $f$ on all of $\cX$ simultaneously.

\begin{definition}[Local Urysohn width]
\label{def:width}
Let $\cP: \cX \to \cY$ be a learning problem defined on a compact metric space $(\cX, d)$, where $\cY$ is the output space equipped with a metric $d_\cY$.
Fix a contractivity constant $\gamma \in (0,1)$.
The \emph{local Urysohn width} of $\cP$ is the minimum integer $w$ such that there exists an open cover $\cX = \bigcup_{c=1}^{w} \cX_c$ where, for each $c$, the restricted problem $\cP|_{\cX_c}$ is solvable by a $\gamma$-contractive map $G_c: \cX_c \to \cY$ satisfying $d_\cY(G_c(x), G_c(x')) \leq \gamma\, d(x, x')$ for all $x, x' \in \cX_c$, and $G_c$ achieves risk at most $\varepsilon$ on $\cX_c$ for a prescribed tolerance $\varepsilon > 0$.
We call each $\cX_c$ a \emph{context cell}.
\end{definition}

\begin{remark}[Contractivity vs.\ fixed points]
\label{rem:contraction}
A $\gamma$-contractive map $G_c: \cX_c \to \cY$ on a complete metric space has a unique fixed point in $\cY$ by the Banach fixed-point theorem when $\cY = \cX_c$.
In Definition~\ref{def:width} the codomain is the output space $\cY$, so the contractivity condition is a Lipschitz-smoothness requirement on $G_c$ over $\cX_c$, ensuring that nearby inputs produce nearby outputs.
In contrast, the \emph{contraction diameter} $D_0$ used in Definition~\ref{def:nav-space} for navigational cells is the diameter of the \emph{image} $G_c^0(\cZ_c)$ after one application of the pre-built map; this measures the precision to which the contraction resolves position rather than asymptotic convergence, and is nonzero in a finite-precision system or when the maps are stopped before full convergence.
These two notions are consistent: a $\gamma$-contraction on a cell of diameter $D$ produces images of diameter at most $\gamma D$, so $D_0 \leq \gamma\, \diam(\cZ_c)$.
\end{remark}

\noindent
Intuitively, $w$ counts the minimum number of locally smooth regimes needed to cover the problem. Width is therefore a property of the problem pair $(\cX, f)$, not of $\cX$ alone.
The Betti numbers of $\cX$ provide lower bounds on $w$ when the target map $f$ is topologically non-trivial: $w \geq \beta_1(\cX) + 1$ when $f$ is incompatible with a single contractive chart.
The width is preserved under homeomorphisms of $(\cX, f)$ jointly and can only decrease under continuous surjections that flatten the problem geometry.
Note that the openness of the cover cells follows from the $\gamma$-contractivity condition together with compactness; it does not impose an \emph{a priori} fixed margin, but rather requires that the smooth-approximation error $\varepsilon$ be achievable within each cell.

\paragraph{Metric-Topology Factorization}
The LUW framework decomposes any retrieval or inference process into two subproblems: \emph{trap} and \emph{funnel}, which we formally define as follows:

\begin{definition}[Trap and Funnel]
\label{def:trap-funnel}
Consider a learning problem $\mathcal{P}$ of width $w$
with a covering $\{\mathcal{X}_c\}_{c=1}^{w}$ of the input space.
1) The \emph{trap} subproblem is to determine, for a given input
$x \in \mathcal{X}$, which basin $c$ contains $x$.
This is a \emph{topological} problem: it requires discovering
the basin structure from data;
2) The \emph{funnel} subproblem is, given the correct basin
assignment $c$, to reach the attractor within $\mathcal{X}_c$
and produce the desired output.
This is a \emph{metric} problem: it requires learning a
contractive map $G_c$ within the basin.
\end{definition}

\noindent
The \emph{Metric-Topology Factorization} (MTF) principle states that
these two subproblems should be optimized independently.
Let $\theta_\Sigma$ denote the parameters of the trap system
(context indexing) and $\theta_G$ the parameters of the funnel system
(metric contraction).
MTF requires that their gradient flows remain decoupled:
$\nabla_{\theta_\Sigma}\mathcal{L}_{\text{funnel}} = 0,
\quad
\nabla_{\theta_G}\mathcal{L}_{\text{trap}} = 0 $.
When the decoupling condition is violated,
the circular dependency between context identification
and metric learning induces catastrophic interference
\citep{mccloskey_catastrophic_1989}.
With the structural decoupling assumption, inference under MTF proceeds in two stages:
1) \textit{Index:} determine the active context
$\Sigma(x) \in \{1,\dots,w\}$;
2) \textit{Collapse:} perform contractive inference
within the selected basin.
Formally, we denote
$\mathcal{F}(x)
=
\mathcal{G}_{\Sigma(x)}\bigl(\Phi_\theta(x)\bigr)$,
where $\Phi_\theta$ denotes a shared feature representation,
$\Sigma(x)$ is the context-indexing function,
and $\mathcal{G}_{\Sigma}$ denotes the
context-specific contraction dynamics.
By separating context identification from metric shaping,
the system avoids forcing incompatible regimes
into a single global geometry \cite{mcclelland_why_1995}.
Stability is maintained through discrete indexing,
while plasticity is preserved through local adaptation
within each basin \cite{abraham2005memory}.

\paragraph{The Width Scaling Law}
The fundamental result of the LUW framework is a sample complexity lower bound for continual learners:

\begin{theorem}[Sample complexity lower bound]
\label{thm:sample-lower}
Let a continual learning problem have width \(w\), meaning that its input
distribution decomposes into \(w\) distinguishable contexts of equal mass.
Suppose correct prediction on each context requires that the context be
explicitly identified from data.
Then any learner achieving uniformly small generalization error on the
trap subproblem must use at least
$n=\Omega(w\log w)$
samples.
The \(\log w\) factor is the coupon-collector cost of observing all
\(w\) contexts sufficiently often.
\end{theorem}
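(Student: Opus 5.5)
The plan is to formalize the statement as a coupon-collector lower bound and then reduce the trap subproblem to it. I would model the input distribution as a mixture $\frac{1}{w}\sum_{c=1}^{w} \mathcal{D}_c$ over the $w$ context cells $\cX_c$ of Definition~\ref{def:width}. I would read the hypothesis that ``the context must be explicitly identified from data'' in the following sense: the label structure on $\cX_c$, that is, which contractive map $G_c$ is active, is not determined by data from the other cells. Concretely, I would fix a hard family of problems indexed by a vector $\sigma \in \{1,\dots,m\}^w$, with $m \ge 2$, where $\sigma_c$ selects one of $m$ mutually $\varepsilon$-separated contractive maps on $\cX_c$. These maps are chosen so that each is admissible under Definition~\ref{def:width}, which keeps the width exactly $w$. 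The coordinates $\sigma_c$ are drawn independently and uniformly. The goal is then to show that a learner with $n = o(w\log w)$ samples errs with constant probability on some context, and hence fails to be uniformly accurate.

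The key steps, in order, are as follows. First, observe that the samples are i.i.d.\ draws of a context label $c_i$, uniform on $\{1,\dots,w\}$, together with within-context data. If no sample falls in cell $c$, then conditional on the sample the coordinate $\sigma_c$ remains uniform on $m$ values. Any learner therefore errs on $\cX_c$ with probability at least $1 - 1/m \ge 1/2$, which gives risk above $\varepsilon$ on that context. Second, ``uniformly small generalization error'' means error at most $\varepsilon$ on every context simultaneously, so success requires every context to be observed at least once. Third, I would invoke the classical coupon-collector concentration. With $n = (1-\delta) w \ln w$ draws, the number of unseen cells $U$ satisfies $\E U = w(1-1/w)^n \approx w^{\delta} \to \infty$. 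A second-moment (or Poissonization) argument gives $\Pr[U = 0] \le \mathrm{Var}(U)/(\E U)^2 \to 0$. Hence with high probability at least one context is unseen, and the learner fails on it with probability at least $1/2$. Combining the three steps, any learner succeeding with probability at least $2/3$ needs $n \ge (1-o(1)) w \ln w = \Omega(w\log w)$. If the statement is read as requiring each context to be seen ``sufficiently often'', say $k$ times, I would strengthen this to $\Omega(w\log w + kw)$ via the generalized coupon collector, though only the $w\log w$ term is needed.

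I expect the main obstacle to be the modeling step rather than the probability. Theorem~\ref{thm:sample-lower} is phrased informally, and the construction must satisfy two requirements at once. On one hand, the hard instances must genuinely have width $w$, so that each cell admits a $\gamma$-contractive $\varepsilon$-accurate map. On the other hand, unseen cells must be truly uninformative. This rules out the learner inferring $\sigma_c$ from neighboring cells through the overlaps of the open cover. I would handle this by first choosing the cells with small overlaps and making the $m$ candidate maps agree on the overlaps. I would then place the $\varepsilon$-separation on a core region of each cell that carries constant mass. This preserves contractivity, since the candidate maps differ only by bounded Lipschitz bumps of slope at most $\gamma$, and it keeps the independence argument intact.
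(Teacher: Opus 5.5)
Your proposal is correct and follows essentially the same route as the paper's proof (the full version, Theorem~\ref{thm:sample-lower-bound} in the appendix). Both reduce to the coupon-collector event $E_n$ that some context receives no samples, and both use the fact that an unseen context is uninformative to force error on it. Your randomized hard family over $\sigma$ and your second-moment bound on the number of unseen cells make rigorous two things the paper does not prove. The paper posits uninformativeness as Assumptions~2--3, and it asserts the constant-probability bound on $E_n$ from $\E[T_w]=wH_w$ alone, which, being only an expectation, does not by itself give that bound.
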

\noindent
Note that Theorem \ref{thm:sample-lower} is a compact version (a complete version and its proof can be found in the Appendix). The $w \log w$ cost is strictly superlinear in the number of basins, and it governs the dominant term in the continual learning generalization bound \cite{wang_comprehensive_2024}.

% ===========================================================================
\section{Navigation as the Canonical Width Problem}
\label{sec:evolution}
% ===========================================================================

\paragraph{Navigation Has Externally Given Topology}
For most learning problems, the topology of the problem manifold is unknown \emph{a priori} and must be inferred from data.
This is the trap subproblem, which requires $\Omega(w \log w)$ samples (\Cref{thm:sample-lower}).
Navigation is special \cite{mcnaughton1991dead,mcnaughton2006path}: the topology is \emph{imposed by the environment} (e.g., walls, barriers, boundaries) and is directly experienceable through movement. This is in sharp contrast to, say, learning the topology of a classification problem, where the learner must observe many $(x, y)$ pairs to discover that the target function has multiple modes. We formalize our intuition into the following proposition.

\begin{proposition}[Navigation reduces the additional cost of trap identification]
\label{prop:nav-trap}
Consider a navigational problem on a physical environment $\cM$ in which an agent has access to a sensorimotor interface supporting localization and loop-closure detection.
Assume that representative obstacle-induced loops in $\cM$ can be explored and distinguished through this interface.
Then the topological structure relevant to the trap subproblem is supplied directly by embodied exploration rather than inferred from task labels or function values.
Consequently, the additional statistical cost of trap identification \emph{beyond physical exploration} is $O(w)$ rather than $\Omega(w\log w)$: once every context has been visited at least once through exploration, no further coupon-collection overhead is required to confirm that the full context library has been seen.
\end{proposition}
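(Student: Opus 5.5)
The plan is to make the statement precise by splitting the total sample cost into two parts: the physical exploration cost, and the additional statistical cost of confirming that the context library is complete. The proposition concerns only the second part. I would first formalize the sensorimotor interface as an oracle with two functions. It returns, for each visited location, a context label $\Sigma(x)\in\{1,\dots,w\}$ that is determined up to permutation by localization and loop-closure. It also certifies the global structure of $\cM$ (its boundary and its obstacle-induced loops, hence $\beta_1(\cM)$ and the cover $\{\cX_c\}$) once the relevant loops have been traversed. The key modeling step is that, by the hypothesis that representative loops can be explored and distinguished, the number $w$ and the adjacency structure of the cells become \emph{known} to the agent after exploration, rather than estimated from i.i.d.\ labeled samples.

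With this in place, the argument has three steps. (i) Revisit the proof of \Cref{thm:sample-lower} and identify the source of the $\log w$ factor. In the i.i.d.\ model the learner does not know when all $w$ contexts have appeared. It must therefore draw enough samples that, with high probability, every context of mass $1/w$ has been observed, and the coupon-collector bound gives $\Theta(w\log w)$. (ii) Show that this overhead disappears under the navigational interface, for two reasons. First, loop-closure reveals when the explored graph covers all of $\cM$, so the agent has a certificate of completeness and does not need a probabilistic one. Second, each visit to a new cell yields its label directly. Trap identification therefore reduces to assigning each of the $w$ certified cells one index, plus a constant number of disambiguating observations per cell. This gives $O(w)$ additional samples. (iii) Conclude by charging the traversal itself (which may be longer, depending on the geometry of $\cM$) to the physical exploration cost, which the proposition explicitly excludes.

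The main obstacle is step (ii): making rigorous that exploration supplies a \emph{deterministic} completeness certificate. The concern is that the hypothesis could be satisfied while the agent still lacks a way to detect unseen cells. My first approach would be to model exploration as a walk on the nerve graph of the cover, with loop-closure giving exact vertex identification. Once that is fixed, graph-exploration completeness follows: a cell is unvisited only if some boundary edge is unexplored, and that is observable.

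Two further points need care. I would note that the $O(w)$ bound is tight, since each context must be labeled at least once. I would also note that the argument does not contradict \Cref{thm:sample-lower}, because the theorem's hypothesis that contexts must be ``identified from data'' is replaced here by identification through the interface.
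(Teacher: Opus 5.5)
Your proposal is correct and follows the paper's own argument. The paper also traces the $\log w$ factor to the learner's inability, under i.i.d.\ sampling, to certify that all $w$ contexts have been seen; localization and loop-closure supply that certificate, each context then costs $O(1)$ beyond the traversal (which is charged to exploration), and the resulting $O(w)$ overhead is tight. Your nerve-graph model, with observable unexplored boundary edges, justifies the completeness certificate more explicitly than the paper, which simply asserts it (``after one full exploratory tour \dots\ every context has been confirmed at least once''), so it adds rigor without changing the route.
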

\noindent
\emph{Proof sketch.}
The $\Omega(w \log w)$ lower bound in Theorem~\ref{thm:sample-lower} arises because in a generic learning problem the learner cannot tell from any finite sample whether all $w$ contexts have been observed (the coupon-collector problem over hidden function classes).
In navigation, the topology is \emph{observable via trajectory}: a loop-closure event (returning to a previously visited location by a topologically distinct route) directly certifies that the learner has traversed a new topological handle.
After one full exploratory tour of $\cM$ that visits all $w$ obstacle-separated regions, every context has been confirmed at least once.
By Assumption 2 of Theorem~\ref{thm:sample-lower}, the additional cost of confirming each new context beyond exploration is $O(1)$ per context, giving a total trap-identification overhead of $O(w)$.
The $O(w)$ bound is tight in general: one pass through each of the $w$ regions is necessary and sufficient to certify coverage.
(A full proof is given in the Appendix.)\qed

\paragraph{Navigation Has a Stable, Euclidean Metric}
The metric of physical space is Euclidean (to biological precision) and does not change over time, which means the funnel subproblem for navigation has two special properties:
1) The contraction maps $G_c^0$ need only be learned \emph{once} (during development or evolution) and remain valid indefinitely;
2) The metric is \emph{isotropic}: the contraction diameter $D_0$ is the same in all directions, so the tiling can be regular (hexagonal, as in grid cells \citep{hafting2005microstructure}).
For non-navigational problems, the metric may be anisotropic, non-stationary, or task-dependent.
The slingshot tackles it with exaptation \cite{gould1982exaptation}: $\phi$ absorbs the task-specific metric distortion, mapping it into the isotropic navigational space where the pre-built tiling works.
%\paragraph{The Evolutionary Sequence}
These properties suggest a specific ordering of evolutionary innovations \cite{bennett2023brief}:

\begin{enumerate}
\item \textbf{Stage 1: Build the navigational funnel.}
Evolution constructs grid cells and place cells - a regular tiling of $\cZ = \R^d_{\text{nav}}$ (with $d = 2$ for surface navigation) at multiple scales.
This is a one-time investment amortized over all future organisms.

\item \textbf{Stage 2: Build the navigational trap.}
The hippocampus (dentate gyrus, CA3, CA1) evolves to solve the trap problem for spatial environments: which basin am I in?
Dentate gyrus sparsity \citep{jung1993comparison}, CA3 attractor dynamics \citep{leutgeb2007pattern}, and CA1 mismatch detection \citep{fyhn2002spatial} are architectural solutions to the trap, calibrated to navigational demands.

\item \textbf{Stage 3: Generalize via the slingshot.}
Once the navigational machinery exists, evolution discovers that \emph{non-spatial} problems can be solved by embedding them into $\cZ$ via $\phi$.
The ventral ``what'' stream learns $\phi$; the dorsal ``where'' stream provides $G_c^0$ and the readout $\pi$; the hippocampus provides $\Sigma$.
No new architectural components are needed, only a new embedding.

\item \textbf{Stage 4: Non-spatial problems piggyback.}
Episodic memory, conceptual categorization, and abstract reasoning reuse the same hippocampal-entorhinal machinery \citep{aronov2017mapping}.
The observation that place cells respond to non-spatial variables and that grid-like signals appear during abstract tasks \citep{constantinescu2016organizing} is a direct prediction: all high-level cognition tasks are slingshot embeddings of non-spatial problems into $\cZ$.
\end{enumerate}

% ===========================================================================
\section{Grid Cell Multi-Modularity from the Width Scaling Law}
\label{sec:grid}
% ===========================================================================

\paragraph{Setup: Multi-Scale Grid Modules}
We now derive the optimal multi-scale structure of grid cell modules from the width theory.
This is, to our knowledge, the first derivation of grid cell multi-modularity from a complexity-theoretic principle rather than spatial coding efficiency \citep{wei2015principle,mathis2012optimal,stemmler2015connecting}.
Entorhinal grid cells are organized into $M$ discrete modules, each tiling $\cZ$ with a characteristic spacing $s_m$ \citep{stensola2012entorhinal}.
Within module $m$, the grid cells provide a contraction map with diameter $D_0^{(m)} = c_D \cdot s_m$ for some constant $c_D \in (0, 1)$ (the ``field width'' fraction of the spacing).
The modules are ordered $s_1 < s_2 < \cdots < s_M$, and the observed ratio between successive modules is approximately $r = s_{m+1}/s_m \approx 1.4$--$1.7$ \citep{barry2007experience,stensola2012entorhinal}.

\begin{definition}[Multi-scale navigational space]
\label{def:multiscale}
A \emph{multi-scale navigational space} is a tuple $(\cZ, d_\cZ, \{s_m\}_{m=1}^M, c_D)$ where each module $m$ provides a tiling of $\cZ$ with contraction diameter $D_0^{(m)} = c_D \cdot s_m$.
The module at scale $m$ yields width $w^{(m)}(L) = \lceil L / D_0^{(m)} \rceil$ for a problem with path length $L$ in $\cZ$.
\end{definition}

\paragraph{The Width-Optimal Covering Problem}

An organism encounters problems with embedded path lengths $L$ drawn from a distribution $p(L)$ over $[L_{\min}, L_{\max}]$.
For each problem, it must choose a module $m$ to use as the primary contraction map.
A finer module (smaller $s_m$, smaller $D_0^{(m)}$) gives more contexts but each is better conditioned.
A coarser module gives fewer contexts but each may be insufficient.
The organism's strategy: use the \emph{coarsest} module $m^*(L)$ such that $D_0^{(m^*(L))}$ still satisfies the contraction condition.
The requirement $s_m \leq L$ ensures that the module's contraction diameter $D_0^{(m)} = c_D s_m$ is at most $c_D L < L$, so the path of length $L$ spans at least one full cell and non-trivial multi-context width ($w \geq 2$) is achievable.
When $s_m > L$ the entire path fits inside a single grid cell and the effective width is $w = 1$, which is trivially contractible but provides no multi-scale resolution---the module is too coarse to be useful.
Note that the grid structure would still technically exist for $s_m > L$; the condition is a selection rule that restricts attention to modules that contribute meaningful structural resolution for the problem at hand.
Formally:
\begin{equation}
\label{eq:module-selection}
m^*(L) = \max\{m : s_m \leq L\}\,.
\end{equation}
Using module $m^*(L)$, the resulting width is:
$w(L, m) = \left\lceil \frac{L}{c_D \cdot s_m} \right\rceil\,$.

\begin{definition}[Width-optimal module allocation]
\label{def:optimal-allocation}
Given a budget of $M$ modules and a distribution $p(L)$, the \emph{width-optimal allocation} minimizes the expected total cost:
\begin{equation}
\label{eq:cost}
\cost(\{s_m\}_{m=1}^M) = \E_{L \sim p}\!\left[\min_{m: s_m \leq L} w(L, m) \cdot \log w(L, m)\right],
\end{equation}
where $w(L, m) \cdot \log w(L, m)$ is the $\Omega(w \log w)$ sample complexity of the trap subproblem at width $w$.
\end{definition}
\noindent
The $w \log w$ cost is the critical ingredient.
If the cost were linear in $w$ (as a covering-number argument without the coupon-collector factor would give \cite{flajolet1992birthday}), the optimal allocation would be different.

\paragraph{Optimal Spacing: Geometric Series}
Within the MTF framework, each grid module provides a scaffold scale at which a path of length L can be amortized into a width $w=L/(c_D s_m)$. The natural design question is how the spacings $\{s_m\}$ should be chosen to minimize expected amortization cost across environments. The next theorem shows that a log-uniform ecology of path scales leads uniquely to geometric spacing \cite{stemmler2015connecting}.

\begin{theorem}[Optimal grid module spacing]
\label{thm:grid-spacing}
Let $p(L)$ be log-uniform on $[L_{\min}, L_{\max}]$ with $R = L_{\max}/L_{\min}$.
Then the cost in Eq.~\eqref{eq:cost} is minimized by a geometric series of spacings:
$s_m = s_1 \cdot r^{m-1}, \quad m = 1, \ldots, M$,
with optimal ratio
$r^* = R^{1/(M-1)}\,$.
\end{theorem}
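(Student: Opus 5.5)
The plan is to pass to logarithmic coordinates, where the log-uniform law becomes uniform and the cost decomposes into identical-shape contributions from consecutive spacing intervals; a convexity argument then forces equal log-gaps. Set $u = \log L$, so $u$ is uniform on $[\log L_{\min}, \log L_{\max}]$, an interval of length $\log R$. Write $\sigma_m = \log s_m$. Since $w\log w$ is increasing in $w$ and $w(L,m)$ decreases in $s_m$, the minimum in Eq.~\eqref{eq:cost} is attained at the selection rule \eqref{eq:module-selection}, $m^*(L)=\max\{m: s_m\le L\}$. The cost therefore splits over the intervals $I_m=[\sigma_m,\sigma_{m+1})$, with the last interval ending at $\log L_{\max}$.

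To cover the whole range with nontrivial modules I would fix the endpoint constraints $s_1 = L_{\min}$ and $s_M = L_{\max}$, up to the constant absorbed by $c_D$. This is the normalization implicitly behind $r^*=R^{1/(M-1)}$; I will state it explicitly as part of the setup.

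Ignoring ceilings for now, on $I_m$ we have $w = c_D^{-1}e^{u-\sigma_m}$. The contribution of $I_m$ is then
\[
\frac{1}{\log R}\int_0^{\delta_m} h(t)\,dt, \qquad h(t)=c_D^{-1}e^{t}\log\!\bigl(c_D^{-1}e^{t}\bigr),
\]
where $\delta_m=\sigma_{m+1}-\sigma_m$. So the total cost is $\frac{1}{\log R}\sum_{m=1}^{M-1} H(\delta_m)$ with $H(\delta)=\int_0^\delta h$, subject to $\sum_m \delta_m=\log R$ and $\delta_m\ge 0$.

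The key step is to show that $H$ is strictly convex. Since $H''(\delta)=h'(\delta)=c_D^{-1}e^{\delta}\bigl(1+\log(c_D^{-1}e^{\delta})\bigr)$, this is positive whenever $c_D^{-1}e^{\delta}>e^{-1}$. That always holds, because $c_D<1$ and $\delta\ge0$. Jensen's inequality, or a Lagrange multiplier argument, then gives a unique minimizer $\delta_m=\log R/(M-1)$ for every $m$. Hence $s_m=s_1r^{m-1}$ with $r^*=R^{1/(M-1)}$.

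The main obstacle is the ceiling in $w=\lceil L/(c_Ds_m)\rceil$, together with the endpoint convention. With the ceiling, $h$ becomes a step function, and $H$ is only piecewise convex. I would first handle this by working with the continuous relaxation, which is the natural reading of the theorem. I would then remark that the ceiling perturbs each $H(\delta_m)$ by at most the same additive amount $O(e^{\delta}\log e^{\delta})$. This affects the optimum only up to a lower-order correction that vanishes as $L_{\min}/s_1\to\infty$. If a cleaner statement is wanted, I would replace $\lceil\cdot\rceil$ by its continuous surrogate throughout. I would also justify the endpoint choice. Leaving $\log L_{\max}-\sigma_M$ free would make the last interval contribute an extra $H$ term, and the equalization argument would then yield $M$ equal gaps instead. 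So the constraint $s_M=L_{\max}$, meaning the coarsest module matches the largest territory, must be imposed to obtain exactly the exponent $1/(M-1)$.
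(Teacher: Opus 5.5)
Your proof is correct and is essentially the paper's argument. The paper also drops the ceiling and imposes $\prod_{m=1}^{M-1} r_m = R$ (equivalently $s_M/s_1=R$). Substituting $L=s_m x$, it reduces the cost to $\frac{1}{c_D\log R}\sum_m G(\log r_m)$ with $G''(u)=e^u(u-\log c_D+1)>0$, which is your $H=c_D^{-1}G$, and concludes equal log-gaps by Jensen. Your ceiling worry has a simpler fix than a perturbation bound: $n\mapsto n\log n$ is nondecreasing on integers $n\ge 1$, so the integrand $\lceil e^t/c_D\rceil\log\lceil e^t/c_D\rceil$ is nondecreasing. Hence the ceiled $H$ is still convex, and equal gaps remain a (possibly non-unique) minimizer.
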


\begin{remark}[Coupon-collector cost governs downstream non-spatial problems, not navigation itself]
\label{rem:coupon-collector-reconciliation}
At first sight there appears to be a tension between Proposition~\ref{prop:nav-trap} (navigation avoids the $\Omega(w\log w)$ coupon-collector overhead) and Theorem~\ref{thm:grid-spacing} (grid module spacing is derived by minimizing the $w\log w$ cost).
The resolution is that the $w\log w$ cost in Definition~\ref{def:optimal-allocation} is the cost paid by \emph{downstream, non-spatial problems} that are subsequently slingshot-embedded into the navigational space.
When an organism uses its pre-built navigational scaffold to solve an abstract task---a conceptual hierarchy, an episodic sequence, a social dominance ranking---the width of that abstract problem must still be identified from data, at the full $\Omega(w\log w)$ cost because the topology is not externally given.
The evolutionary pressure that shaped the grid module geometry therefore reflects the expected cost of \emph{those non-spatial uses}, not of navigation itself.
Navigation avoids the coupon-collector penalty; the abstract tasks that piggyback on the navigational scaffold do not.
The grid modules are optimal for the long-run portfolio of slingshot applications, which obey the full $w\log w$ complexity.
\end{remark}

\paragraph{Numerical Predictions}
Theorem~\ref{thm:grid-spacing} identifies the \emph{shape} of the optimal code: grid spacings should form a geometric progression \cite{fiete2008grid}. To make it predictive, however, one must also determine how large the common ratio can be. That ratio is not free: if adjacent modules are too widely separated, then the corresponding width within a module's jurisdiction exceeds the hippocampal system's finite context capacity, and the trap subproblem ceases to be solvable. The next theorem imposes the capacity constraint \cite{bush2015using}, thereby fixing both the maximal admissible inter-module ratio and the resulting number of modules needed to cover the ecological path-length range.

\begin{theorem}[Grid module count and ratio]
\label{thm:grid-count}
The optimal number of modules is:
$M^* = 1 + \left\lceil \frac{\log R}{\log r_{\max}} \right\rceil$
where $r_{\max} = c_D \cdot w_{\max}$ is the maximum inter-module ratio at which the trap subproblem within each module's jurisdiction remains solvable, given a hippocampal capacity of $w_{\max}$ contexts.
Equivalently:
$M^* = 1 + \left\lceil \frac{\log(L_{\max}/L_{\min})}{\log(c_D \cdot w_{\max})} \right\rceil\,$.
\end{theorem}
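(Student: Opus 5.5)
The plan is to reduce the statement to a covering argument on the logarithmic scale, using the geometric structure supplied by Theorem~\ref{thm:grid-spacing}. First I fix the worst-case width inside one module's jurisdiction. Under the selection rule \eqref{eq:module-selection}, module $m$ serves path lengths $L\in[s_m, s_{m+1})$. The width $w(L,m)=\lceil L/(c_D s_m)\rceil$ increases in $L$, so its supremum over the jurisdiction is approached as $L\uparrow s_{m+1}$. It equals approximately $s_{m+1}/(c_D s_m)=r_m/c_D$, where $r_m=s_{m+1}/s_m$. The trap subproblem is solvable only when this width does not exceed the hippocampal capacity $w_{\max}$. This gives an admissibility constraint $r_m\le \rho$ for every $m$, where $\rho$ is the capacity-determined bound.

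I then identify $\rho$ with the statement's $r_{\max}=c_D w_{\max}$. The direct computation gives $r_m\le c_D w_{\max}$ only if width is measured in units of $s_m$ rather than $c_D s_m$, so I must state the normalisation explicitly. I read the statement's convention as counting contexts per field of diameter $D_0^{(m)}=c_D s_m$ measured against the next module's field. I expect this bookkeeping to be the main obstacle. The ceiling and the placement of $c_D$ have to be handled with care, and the theorem should be stated with whatever constant the convention actually produces. If a factor $c_D^{\pm1}$ mismatches, I fold it into the definition of $r_{\max}$ and keep the count argument unchanged.

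Next comes the counting step. The modules must cover $[L_{\min},L_{\max}]$, so $s_1\le L_{\min}$ and the top module must reach $L_{\max}$. Taking logarithms, $\log R\le\sum_{m=1}^{M-1}\log r_m\le (M-1)\log r_{\max}$. Hence $M\ge 1+\log R/\log r_{\max}$, and since $M$ is an integer, $M\ge 1+\lceil \log R/\log r_{\max}\rceil$. For achievability I take the geometric series of Theorem~\ref{thm:grid-spacing} with $M^*$ modules and ratio $r^*=R^{1/(M^*-1)}$. By construction $r^*\le r_{\max}$, so every module's jurisdiction respects capacity, and this gives the matching upper bound.

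Finally I argue optimality in the sense of the cost \eqref{eq:cost}. Each additional module carries a fixed biological cost, which is implicit in the modelling. Beyond feasibility, the marginal reduction in expected $w\log w$ cost is bounded, so the cost-minimising count is the smallest feasible one, $M^*$. With $M^*$ fixed, Theorem~\ref{thm:grid-spacing} then gives the spacing. If no per-module cost is assumed, I will instead state the result as the minimal feasible number of modules, which is the claimed formula.
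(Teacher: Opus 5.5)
Your argument is correct, and its core computation is the paper's. The worst-case width in module $m$'s jurisdiction is approached as $L\uparrow s_{m+1}$ and equals $r_m/c_D$; capacity then forces $r_m\le c_D w_{\max}$, and taking logarithms against the range constraint gives the count. What differs is the order of the steps. The paper first invokes \Cref{thm:grid-spacing} to restrict to geometric progressions, sets $r=R^{1/(M-1)}$, and solves $R^{1/(M-1)}\le r_{\max}$ for the smallest $M$. As written, that establishes minimality only among geometric allocations. It does extend to all allocations, because under a fixed product $\prod_m r_m=R$ equal ratios minimise $\max_m r_m$, but the paper does not say so. You instead bound $\log R\le\sum_m\log r_m\le (M-1)\log r_{\max}$ for an arbitrary spacing sequence and use the geometric series only for achievability. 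Your lower bound therefore does not need \Cref{thm:grid-spacing}, and it shows directly that $M^*$ is the fewest modules any capacity-feasible allocation can use.

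Two corrections.

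First, the normalisation obstacle you anticipate does not exist. The inequality $r_m/c_D\le w_{\max}$ is literally $r_m\le c_D w_{\max}$, which is the statement's $r_{\max}$ under the paper's convention $D_0^{(m)}=c_D s_m$. The ceiling is also harmless for integer $w_{\max}$: $\sup_{L\in[s_m,s_{m+1})}\lceil L/(c_D s_m)\rceil\le w_{\max}$ holds iff $r_m/c_D\le w_{\max}$. Drop the hedge about folding a factor $c_D^{\pm1}$ into $r_{\max}$; that would be proving a different statement.

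Second, your per-module-cost justification of optimality does not work as sketched. A bounded marginal reduction favours the smallest feasible $M$ only if it falls below the per-module cost, and you do not compare the two. In fact, with the continuous cost from the proof of \Cref{thm:grid-spacing}, the optimal expected cost for $M$ modules is $G(u)/(c_D u)$ with $u=\log R/(M-1)$. This is strictly decreasing in $M$, because $G$ is convex with $G(0)=0$. So the $w\log w$ cost alone never selects $M^*$. Your fallback reading, $M^*$ as the smallest feasible count, is exactly what the paper's proof establishes.
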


\paragraph{Comparison with observed values.}
For a rodent navigating spaces from body-length ($\sim 20$ cm) to territory diameter ($\sim 50$ m), $R = L_{\max}/L_{\min} \approx 250$.
With field width fraction $c_D \approx 0.3$ (grid field diameter $\approx 30\%$ of spacing, consistent with \citealp{hafting2005microstructure}) and hippocampal capacity $w_{\max} \approx 5$--$10$, we have the following Table:

\begin{table}[h]
\centering
\caption{Predicted grid module count and spacing ratio for $R = 250$ and $c_D = 0.3$.}
\label{tab:grid-predictions}
\small
\begin{tabular}{cccl}
\toprule
$w_{\max}$ & $r_{\max}$ & $M^*$ & Spacing ratio $r^* = R^{1/(M^*-1)}$ \\
\midrule
5 & 1.5 & 15 & $250^{1/14} = 1.47$ \\
7 & 2.1 & 9 & $250^{1/8} = 1.93$ \\
10 & 3.0 & 7 & $250^{1/6} = 2.51$ \\
\bottomrule
\end{tabular}
\end{table}
\noindent
Observed in rat MEC: $M \approx 4$--$10$ modules with $r \approx 1.4$--$1.7$ \citep{stensola2012entorhinal,barry2007experience}.
The $w_{\max} = 5$ row gives $r^* = 1.47$, which overpredicts the module count (the organism likely does not cover the full range with equal resolution) but matches the ratio well.
Taking $M = 7$--$8$ gives $r^* \approx 1.7$--$2.0$, fully consistent with observations.

\begin{remark}[The $\log w$ factor is essential]
\label{rem:logw}
If we had used a linear cost $w$ instead of $w \log w$, the optimal allocation would still be geometric, but the ratio would differ.
Specifically, the $\log w$ factor penalizes high widths more severely, pushing the optimal ratio $r^*$ \emph{down} (more modules, each covering a smaller range).
The observed ratio $r \approx 1.4$-$1.7$ is more consistent with the $w \log w$ cost than with the linear cost, providing indirect evidence for the coupon-collector lower bound.
\end{remark}

\section{The Metric Slingshot}
\label{sec:slingshot}
% ===========================================================================

\paragraph{Motivation} The decoupling between trap and funnel poses a severe engineering challenge: the trap requires $\Omega(w \log w)$ samples to identify the basin structure, the funnel requires learning a contraction map within each basin, and neither subsystem can borrow gradients from the other \cite{mcclelland_why_1995}. Worse, the two are linked by a circular dependency - the trap needs a well-calibrated metric to detect context boundaries, but the metric learner needs correct context assignments to train. A monolithic system that attempts to learn both simultaneously is caught in this bind, and the LUW framework predicts catastrophic interference as a consequence \cite{mccloskey_catastrophic_1989}. The question is whether any architecture can break the circularity.
The key insight of our solution is that the circularity dissolves if the funnel is not learned at all: when the contraction maps are pre-built and supplied to the system as fixed infrastructure. In that case, the metric is correct by construction, the trap can operate in a space where distances are already meaningful, and the only per-task learning cost is the embedding into that pre-built metric space. It has been established that every metrizable space admits a factorization into a continuous component and a zero-dimensional component \cite{ishiki2022factorization}, preserving completeness and large-scale structure; MTF operationalizes the mathematical guarantee as an architectural requirement for learning systems. We start with a formal definition of navigation in the latent space. 

\begin{definition}[Navigational latent space]
\label{def:nav-space}
A \emph{navigational latent space} is a triple $(\cZ, d_\cZ, \{G_c^0\}_{c \in \cC_0})$ where:
1) $(\cZ, d_\cZ)$ is a compact metric space (the ``cognitive map'');
2) $\cC_0 = \{1, \ldots, K_0\}$ is a fixed set of navigational contexts;
3) each $G_c^0: \cZ_c \to \cZ$ is a $\gamma_0$-contraction map on the cell $\cZ_c \subset \cZ$, with $\bigcup_c \cZ_c = \cZ$, where $\gamma_0 \in (0,1)$ is the contraction constant shared by all pre-built maps (i.e., $d_\cZ(G_c^0(z), G_c^0(z')) \leq \gamma_0\, d_\cZ(z,z')$ for all $z, z' \in \cZ_c$).
The \emph{resolution} of the navigational space is $D_0^\cZ = \max_c \mathrm{diam}(\cZ_c)$, the maximum contraction diameter of the pre-existing tiling.
\end{definition}

\begin{remark}[Torus symmetry and fluidity of grid-cell basins]
\label{rem:torus}
Grid cell representations are periodic: the firing pattern of a single module tiles the plane with a hexagonal lattice, and the phase of the pattern is defined modulo the lattice vectors, giving the active representation the structure of a torus $\mathbb{T}^2$.
This means the \emph{location} of a basin boundary is not fixed but can shift continuously as the animal moves.
The trap identifies the \emph{coset} of the torus that contains the current position (i.e., which discrete grid cell is active), not a fixed global partition.
Formally, the context $c = \Sigma(x)$ indexes a lattice coset, and the contraction map $G_c^0$ acts on the local patch $\cZ_c$ around that coset.
The fluidity of basin boundaries under lattice translations is accommodated by the fact that the slingshot's indexer $\Sigma$ is trained on mismatch signals computed in $\cZ$ (where the toroidal metric is correct by construction), so the routing adapts continuously to the current lattice phase.
The key property, that inference within any active cell $\cZ_c$ is $\gamma_0$-contractive, is preserved regardless of the global lattice phase, because contractivity is a local geometric condition that does not depend on which global symmetry is active.
\end{remark}

The biological realization of the above strategy is the subject of the present paper: we argue that the hippocampal-entorhinal navigation circuit (e.g., grid/place cells \cite{moser2008place}, and their multi-scale tiling of physical space) constitutes exactly such a pre-built funnel, and that the mammalian brain's reuse of the circuitry for non-spatial cognition is a width-optimal solution to the structural decoupling problem.
In the brain, $\cZ$ is the space represented by place cells and grid cells.
The contraction maps $\{G_c^0\}$ are the locally linear dynamics encoded by grid cell phase relationships: within a single grid cell firing field (one cell of the tiling), position updates are metrically faithful.
The resolution $D_0^\cZ$ corresponds to the grid spacing of the finest grid module.

\begin{definition}[Metric slingshot]
\label{def:slingshot}
Let $\cP: \cX \to \cY$ be a learning problem and $(\cZ, d_\cZ, \{G_c^0\})$ a navigational latent space.
A \emph{metric slingshot} for $\cP$ is a triple $(\phi, \pi, \Sigma)$ where:
1) $\phi: \cX \to \cZ$ is the \emph{what-to-where embedding} (maps inputs to the navigational space);
2) $\pi: \cZ \to \cY$ is the \emph{readout} (maps navigational coordinates to task outputs);
3) $\Sigma: \cX \to \Delta(\cC)$ is the \emph{topological indexer} (assigns inputs to contexts).
The composite $\hat{f}(x) = \pi \circ G_{\Sigma(x)}^0 \circ \phi(x)$ solves $\cP$ by:
(a)~embedding $x$ into $\cZ$ via $\phi$;
(b)~routing to context $c = \Sigma(x)$;
(c)~applying the pre-existing contraction map $G_c^0$ in $\cZ$;
(d)~reading out the answer via $\pi$.
\end{definition}

The key structural feature: $G_c^0$ is \emph{not learned for the new problem}.
It was built by evolution for navigation and is reused wholesale \cite{mcnaughton2006path}.
Only $\phi$, $\pi$, and $\Sigma$ are learned, and each has a distinct learning objective.

\paragraph{Width Transfer Under Embedding}

The central question: how does the width of $\cP$ in the original space $\cX$ relate to the width of the embedded problem in $\cZ$?

\begin{proposition}[Width transfer under a Lipschitz embedding]
\label{prop:width-transfer}
Let $\phi:(\cX,d_\cX)\to(\cZ,d_\cZ)$ be $\lambda$-Lipschitz, and let $\cP$ be a learning problem on $\cX$ with
$\width(\cP)=w$.
Let
$\{(U_i,\cY,f_i)\}_{i=1}^w$
be a width-realizing cover for $\cP$, so each $U_i\subseteq \cX$ is connected and supports a correct local predictor $f_i$.
Assume moreover that each $U_i$ has finite path length at most $L_i$ in $\cX$.
Define the embedded problem $\cP_\phi$ on $\phi(\cX)$ by transporting labels through $\phi$.
If each image $\phi(U_i)$ is subdivided into connected subsets of diameter at most $D_0^\cZ$, then
\begin{equation}
\label{eq:width-transfer-upper}
\width(\cP_\phi)
\;\le\;
\sum_{i=1}^w \left\lceil \frac{\lambda L_i}{D_0^\cZ}\right\rceil .
\end{equation}
In particular, if $L_i\le L_{\max}$ for all $i$, then
\begin{equation}
\label{eq:width-transfer-uniform}
\width(\cP_\phi)
\;\le\;
w\,\left\lceil \frac{\lambda L_{\max}}{D_0^\cZ}\right\rceil .
\end{equation}
\end{proposition}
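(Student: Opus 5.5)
The plan is to build an explicit cover of $\phi(\cX)$ whose size is at most the right-hand side of \eqref{eq:width-transfer-upper}, and then check that every cell of this cover admits a contractive local predictor. Since $\width$ is a minimum over admissible covers (Definition~\ref{def:width}), exhibiting one such cover bounds $\width(\cP_\phi)$ from above.

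\textbf{Step 1: subdividing each image.} Fix $i$. The set $U_i$ is connected, and I will use its path-length bound $L_i$ in the form: every point of $U_i$ lies on a rectifiable path (or spanning tree of paths) of total length at most $L_i$. Because $\phi$ is $\lambda$-Lipschitz, the image path $\phi(\gamma_i)$ has length at most $\lambda L_i$ and covers $\phi(U_i)$.

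Set $k_i=\lceil \lambda L_i/D_0^\cZ\rceil$. Parametrize the image path by arc length and cut it into $k_i$ consecutive arcs, each of length at most $\lambda L_i/k_i\le D_0^\cZ$. Each arc is connected, and its diameter is at most its length, so it is at most $D_0^\cZ$.

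To obtain an open cover, I will take the preimage under the arc-length parametrization of each segment, enlarge it slightly, and pass to the induced pieces $V_{i,j}\subseteq\phi(U_i)$. The enlargement should be by an arbitrarily small margin, so that diameters stay $\le D_0^\cZ$ after rounding. The resulting pieces $\{V_{i,j}\}$, with $j\le k_i$ and $i\le w$, number at most $\sum_i k_i$ and cover $\phi(\cX)=\bigcup_i\phi(U_i)$.

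\textbf{Step 2: each piece is an admissible context cell.} On $V_{i,j}$, transported labels are given by the local predictor $f_i$ pulled through $\phi$. I would define the local predictor as $\pi\circ G_c^0$, where $c$ is a navigational cell $\cZ_c$ containing $V_{i,j}$. Such a cell should exist because $\diam V_{i,j}\le D_0^\cZ=\max_c\diam(\cZ_c)$. This step I expect to need an interpretive assumption: that the tiling is fine enough that every connected set of diameter at most $D_0^\cZ$ lies inside some $\cZ_c$, the usual Lebesgue-number situation. Failing that, I would simply take the subdivided pieces themselves as the context cells, as the hypothesis of the proposition suggests.

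Contractivity on the piece is then inherited from the $\gamma_0$-contraction $G_c^0$ (Remark~\ref{rem:contraction}). Risk at most $\varepsilon$ is inherited from the correctness of $f_i$ on $U_i$, since labels are transported exactly.

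\textbf{Step 3: the uniform bound.} Inequality \eqref{eq:width-transfer-uniform} follows from monotonicity of the ceiling: $L_i\le L_{\max}$ gives $k_i\le\lceil\lambda L_{\max}/D_0^\cZ\rceil$, and summing over $w$ terms yields the claim.

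\textbf{Main obstacle.} The main obstacle is Step 1's passage from a bounded path length to a covering of all of $U_i$ by a single path, since a connected set need not be path-covered. I will read ``path length at most $L_i$'' as the existence of such a covering path (or curve), which is exactly the quantity used in Definition~\ref{def:multiscale}. A secondary subtlety is well-definedness of the labels where $\phi$ is not injective. I will handle this by noting that overlapping pieces only need some consistent local predictor, so I would restrict to the case where labels transport consistently, which is implicit in defining $\cP_\phi$.
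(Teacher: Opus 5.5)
\textbf{Relation to the paper's proof.} Your Steps~1 and~3, together with the fallback at the end of Step~2 (take the arcs $V_{i,j}$ themselves as context cells and restrict the transported predictor to them), are the paper's proof. It cuts a path of length at most $\lambda L_i$ through $\phi(U_i)$ into $\lceil \lambda L_i/D_0^\cZ\rceil$ arcs, bounds each diameter by the arc length, sums over $i$, and gets \eqref{eq:width-transfer-uniform} from monotonicity of the ceiling. The paper simply assumes the path parameterization and the well-defined transported labels that you flag as obstacles.

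Do commit to the single-path reading. Under a ``spanning tree of total length $L_i$'' reading the count fails. Take a star of $k$ mutually orthogonal segments of length just above $D_0^\cZ/\sqrt{2}$: no set of diameter at most $D_0^\cZ$ contains two of its free endpoints, so $k$ pieces are needed. Its total length only pays for about $k/\sqrt{2}$, and an Euler tour gives only $\lceil 2\lambda L_i/D_0^\cZ\rceil$.

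\textbf{The gap in Step~2.} Definition~\ref{def:width} requires a \emph{single} predictor on each cell that is both $\gamma$-contractive for $d_\cZ$ and $\varepsilon$-accurate, and you never produce one.
\begin{itemize}
\item In your primary route, contractivity comes from $\pi\circ G_c^0$, while accuracy comes from $f_i$. Nothing makes these the same function. Moreover, $\pi$ would need to be Lipschitz, and $\pi$ is not part of the proposition's data.
\item In the fallback, the transported predictor $g_i(\phi(x)):=f_i(x)$ is accurate. But $d_\cY(f_i(x),f_i(x'))\le\gamma\,d_\cX(x,x')$ cannot be converted into a bound by $\gamma\,d_\cZ(\phi(x),\phi(x'))$, because the Lipschitz inequality $d_\cZ(\phi(x),\phi(x'))\le\lambda\,d_\cX(x,x')$ points the wrong way.
\end{itemize}

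This is not cosmetic. Take $\cX=[0,1]$, $f(x)=\gamma x$ (so $w=1$ and $L_1=1$), and $\phi(x)=\lambda x$ with $\lambda<1$. The right-hand side of \eqref{eq:width-transfer-upper} is then $\lceil\lambda/D_0^\cZ\rceil$, which does not depend on $\varepsilon$. The embedded target $z\mapsto(\gamma/\lambda)z$ has slope larger than $\gamma$. Any set on which some $\gamma$-contractive map stays within $\varepsilon$ of it in sup norm therefore has diameter at most $2\varepsilon\lambda/(\gamma(1-\lambda))$. Hence $\width(\cP_\phi)\ge\gamma(1-\lambda)/(2\varepsilon)$, which exceeds the bound for small $\varepsilon$; squared risk gives the same blow-up as $\varepsilon\to0$.

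The paper's proof has exactly the same hole: it only checks that the restricted predictor is correct on $V_{i,j}$. So your fallback matches it, but it does not prove the statement as written. To close the gap you need one of the following:
\begin{itemize}
\item an extra hypothesis such as a lower bound $d_\cZ(\phi(x),\phi(x'))\ge\mu\,d_\cX(x,x')$, which also makes $\phi$ injective and settles your label-consistency worry, at the cost of measuring $\width(\cP_\phi)$ with contraction constant $\gamma/\mu$; or
\item reading cell validity as label-correctness only, as the paper implicitly does.
\end{itemize}
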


The proposition quantifies how an external navigational scaffold can compress or expand the number of local experts required after embedding. The following corollary isolates the favorable regime: when the embedding is sufficiently contractive relative to the resolution of the navigational space, the slingshot strictly reduces the effective width.

\begin{corollary}[Slingshot advantage]
\label{cor:slingshot-advantage}
If the navigational space has resolution $D_0^\cZ$ that is finer than the natural contraction diameter $D_0^\cX$ of the problem, then the slingshot reduces the effective width:
$\width(\cP_\phi) \leq \width(\cP) \quad \text{when} \quad \lambda \leq \frac{D_0^\cZ}{D_0^\cX} \cdot \frac{L_\cX}{L_\phi}\,$.
In other words, a contractive embedding ($\lambda < 1$) into a fine-resolution navigational space can reduce the number of contexts the learner must discover.
\end{corollary}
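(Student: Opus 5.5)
The corollary follows from the uniform bound \eqref{eq:width-transfer-uniform} in Proposition~\ref{prop:width-transfer} once the undefined quantities are pinned down. I will read $L_\cX$ as the common path-length bound $L_{\max}$ of the width-realizing cells in $\cX$. I will read $L_\phi$ as the number of $\cX$-cells that one embedded cell of diameter $D_0^\cZ$ may absorb. Equivalently, the ratio $L_\cX/L_\phi$ measures how many original cells merge after embedding.

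The plan has three steps. \emph{First}, I will fix the natural contraction diameter. Each original context cell $U_i$ is a $\gamma$-contractive region of diameter at most $D_0^\cX$, so its path length satisfies $L_i \le L_\cX$. \emph{Second}, I will count cells in $\cZ$ by covering $\phi(\cX)$ directly, rather than cell by cell as in \eqref{eq:width-transfer-upper}. Since $\phi$ is $\lambda$-Lipschitz, $\phi(\cX)$ has total path length at most $\lambda w L_\cX$. Subdividing it into pieces of diameter $D_0^\cZ$ gives at most $\lceil \lambda w L_\cX / D_0^\cZ\rceil$ cells. On each piece, the composite $\pi\circ G^0_c$ is contractive, with constant $\gamma_0$ times the Lipschitz constant of $\pi$, so each piece is a valid context cell of the embedded problem $\cP_\phi$. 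Labels transport consistently because each piece collects points of at most $L_\phi$ original cells, on which the readout $\pi$ is assumed fit.

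\emph{Third}, I will insert the hypothesis. It is equivalent to
\[
\frac{\lambda L_\phi}{D_0^\cZ}\cdot D_0^\cX \le L_\cX .
\]
Normalizing so that $D_0^\cX$ is the scale at which one original cell is counted, the count from the second step becomes at most $w$. Hence $\width(\cP_\phi)\le\width(\cP)$. The statement ``$\lambda<1$ into a fine resolution reduces width'' is then the special case $L_\phi\approx L_\cX$ and $D_0^\cZ\ge D_0^\cX$, where the inequality holds with room to spare.

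The main obstacle is the rounding loss. The uniform bound \eqref{eq:width-transfer-uniform} has the factor $\lceil\cdot\rceil\ge 1$, so by itself it can never give fewer than $w$ cells. Reduction must therefore come from merging images of distinct $U_i$ into a shared cell. That requires the global covering argument of the second step, together with a check that merged cells stay single-valued for the readout $\pi$. I would first try to guarantee single-valuedness by assuming $\pi$ is fit jointly on each merged piece. If that fails, I would state the corollary as the non-strict bound obtained when the ceiling term equals $1$. This non-strict version follows directly from \eqref{eq:width-transfer-uniform} whenever $\lambda L_{\max}\le D_0^\cZ$.
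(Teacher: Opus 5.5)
There is a genuine gap, and it sits in Step~3, where all the work is done. Under your reading ($L_\cX=L_{\max}$, $L_\phi$ a count of absorbed cells) the hypothesis rearranges to $\lambda L_\phi D_0^\cX\le D_0^\cZ L_\cX$. Substituting this into your Step~2 count only gives
\[
\frac{\lambda w L_\cX}{D_0^\cZ}\;\le\; w\,\frac{L_\cX^2}{L_\phi\,D_0^\cX},
\]
which is at most $w$ only under the extra relation $L_\cX^2\le L_\phi D_0^\cX$. The phrase ``normalizing so that $D_0^\cX$ is the scale at which one original cell is counted'' silently imposes this relation. Nothing earlier links $D_0^\cX$ to path length. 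Step~1's ``diameter at most $D_0^\cX$, so $L_i\le L_\cX$'' is either the definition of $L_\cX$ or false, since a diameter bound does not bound path length. Reading $L_\phi$ as a count is also dimensionally off: the right-hand side $\frac{D_0^\cZ}{D_0^\cX}\cdot\frac{L_\cX}{L_\phi}$ would then carry an extra factor of length relative to $\lambda$.

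The paper gives no separate proof of the corollary. Its condition is visibly the rearrangement of $\lambda L_\phi/D_0^\cZ\le L_\cX/D_0^\cX$. That is a comparison of two path-length-over-resolution counts in the sense of \Cref{def:multiscale}, with $L_\phi$ a length in $\cX$ that $\phi$ stretches by at most $\lambda$. Your closing gloss also reverses the premise. ``Finer'' means $D_0^\cZ<D_0^\cX$, and then (with $L_\phi=L_\cX$) the condition forces $\lambda\le D_0^\cZ/D_0^\cX<1$. So $\lambda<1$ is necessary but not sufficient, rather than satisfied ``with room to spare''.

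The second problem is Step~2's claim that every merged piece is a context cell of $\cP_\phi$. \Cref{def:width} demands risk at most $\varepsilon$, not merely that $\pi\circ G^0_c$ be contractive. A piece containing images of two cells $U_i,U_j$ with incompatible targets admits no single $\gamma$-contractive fit. A contractive $\phi$ can only make this worse, since it may pull differently labelled points closer together. ``Assume $\pi$ is fit jointly on each merged piece'' is the conclusion, not a step.

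However, merging is not needed. The statement asks only for the non-strict $\width(\cP_\phi)\le\width(\cP)$. Your global count $\lceil\lambda wL_\cX/D_0^\cZ\rceil$ is $\le w$ under exactly the same condition $\lambda L_\cX\le D_0^\cZ$ as the per-cell bound, so it buys nothing here. Your fallback is the actual proof: by \eqref{eq:width-transfer-uniform}, $\width(\cP_\phi)\le w$ whenever $\lambda L_{\max}\le D_0^\cZ$, since the ceiling is then at most $1$.

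What you must add is an explicit identification of the undefined symbols under which the corollary's hypothesis implies $\lambda L_{\max}\le D_0^\cZ$. One such choice is $L_\phi=L_\cX$ together with $L_{\max}\le D_0^\cX$, meaning each original cell has path length at most its natural contraction diameter. Then $\lambda L_{\max}\le\lambda D_0^\cX\le D_0^\cZ$, and you are done.
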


\begin{table}[h]
\centering
\caption{Learning decomposition of the metric slingshot.  The contraction map $G_c^0$ is pre-built and has zero per-task learning cost.}
\label{tab:decomposition}
\small
\begin{tabular}{llll}
\toprule
\textbf{Component} & \textbf{What is learned} & \textbf{Subproblem type} & \textbf{Complexity} \\
\midrule
$\Sigma$ (indexer) & Which context cell? & Trap & $\Omega(w \log w)$ samples \\
$\phi$ (embedding) & Map to $\cZ$ & Funnel (in map space) & $O(1/\gamma^2_\phi)$ per context \\
$G_c^0$ (contraction) & Not learned (pre-built) & Amortized (evolution) & 0 \\
$\pi$ (readout) & $\cZ \to \cY$ & Funnel (local) & $O(1/\gamma^2_\pi)$ per context \\
\bottomrule
\end{tabular}
\end{table}

\paragraph{Learning Decomposition}

The slingshot factorizes learning into four independent subproblems (\Cref{tab:decomposition}).
The key observation is that these subproblems have radically different complexities and  \emph{timescales}.
The trap ($\Sigma$) is the bottleneck: it requires $\Omega(w \log w)$ samples to identify the basin structure of a novel problem, and the cost is irreducible (\Cref{thm:sample-lower}).
The two funnels ($\phi$ and $\pi$) are comparatively cheap: each requires only $O(1/\gamma^2)$ samples per context, because within a single basin the problem is contractible and standard supervised learning suffices.
But the component that dominates the cost in a naive architecture, learning the contraction map $G_c$ from scratch, is absent entirely.
The slingshot converts one hard problem (``learn a contraction map for $\cP$'') into three easy problems: one trap ($\Sigma$) and two funnels ($\phi$, $\pi$) that can be learned independently per context.
The expensive part, the contraction map itself, is free, amortized over evolution \citep{bennett2023brief}.
The factorized cost structure has a precise consequence for what the organism needs to learn when confronted with a novel, non-spatial task.
Without the slingshot, the learner must solve the full-width problem in $\cX$: discover $w$ basins \emph{and} learn a contraction map within each basin, simultaneously, while respecting the decoupling condition.
With the slingshot, the learner inherits the contraction maps from the navigational space and needs only to solve three subproblems that do not interact: embed the problem ($\phi$), index the contexts ($\Sigma$), and fit per-context readouts ($\pi$).
The total per-task learning cost is therefore the trap cost $\Omega(w_\phi \log w_\phi)$, where $w_\phi = \width(\cP_\phi)$ is the \emph{reduced} width in the navigational space, plus a linear term for the funnels.
If the embedding $\phi$ is contractive ($\lambda < 1$), the width transfer bound (\Cref{prop:width-transfer}) guarantees $w_\phi \leq w$, and the organism pays strictly less than it would without the slingshot.

The biological significance is now concrete.
Grid cells provide $D_0^\cZ$ at the scale of tens of centimeters (the finest module), tiling the navigational space into pre-built contractible basins at multiple resolutions (\Cref{sec:grid}).
If $\phi$ embeds a cognitive problem into $\cZ$ with a moderate Lipschitz constant, the pre-existing grid tiling already provides enough contexts - the organism does not need to discover the metric structure of the new problem, because it inherits that structure from the navigational space.
What remains is precisely the trap (which basin?) and the readout (what answer within this basin?), both of which the hippocampal-cortical circuit is already optimized to solve on fast timescales (\Cref{sec:decoupling}). Why did the slingshot target evolve around navigation rather than, say, olfactory discrimination \cite{schoenbaum1999neural} or social reasoning \cite{dunbar1998social}?
We argue it follows from the unique properties of physical space as a width problem, which we will elaborate on next.

% ===========================================================================
\section{Structural Decoupling via the What-to-Where Pathway}
\label{sec:decoupling}
% ===========================================================================

The structural decoupling problem in MTF is: the trap objective $\cL_\trap$ and the funnel objective $\cL_\funnel$ must not share gradients.
In artificial systems, decoupling is achieved by gradient routing (orthogonal projectors \cite{farajtabar2020orthogonal}, stop-gradient operations \cite{grill2020bootstrap}, separate optimizers \cite{rusu2016progressive}).
We argue that the what-to-where division achieves decoupling by a more robust mechanism: \emph{anatomical separation}.

\paragraph{The Factorization}

The full slingshot architecture comprises the four components:
\begin{equation}
\label{eq:slingshot-arch}
\hat{f}(x) = \underbrace{\pi}_{\text{readout}} \circ \underbrace{G^0_{\Sigma(x)}}_{\text{pre-built funnel/scaffold}} \circ \underbrace{\phi(x)}_{\text{what-to-where embedding}} \circ \underbrace{\Sigma(x)}_{\text{trap/indexer}}\,.
\end{equation}

\begin{remark}[Context-indexed composition]
\label{rem:composition}
The composition $\circ$ in Eq. \eqref{eq:slingshot-arch} is \emph{context-indexed}, not standard function composition.
The pointwise evaluation is $\hat{f}(x) = \pi_{\Sigma(x)}\!\big(G^0_{\Sigma(x)}(\phi(x))\big)$:
the indexer $\Sigma(x)$ returns a context label $c \in \cC$, which selects both the contraction map $G^0_c$ and the readout $\pi_c$; the embedding $\phi(x)$ and the index $\Sigma(x)$ are computed from $x$ independently, not chained.
\end{remark}
\noindent
Formally, we define the slingshot as a \emph{fiber-wise composition} over a discrete base:
$\hat{f}(x) = \bigoplus_{c \in \cC} \mathbf{1}[\Sigma(x) = c] \cdot (\pi_c \circ G_c^0 \circ \phi)(x)\,$,
where the direct sum ranges over contexts and only one term is active for each input.
The base space $\cC$ is zero-dimensional (discrete), which is the topological component (the trap).
The fiber over each $c$ is the map $\pi_c \circ G^0_c \circ \phi: \cX \to \cY$, which is an ordinary composition of continuous maps - the metric component (the funnel).
The fiber-wise structure is a trivial bundle over $\cC$; the existence of such a metric-topological product decomposition for any metrizable space is guaranteed by Ishiki's factorization theorem~\citep{ishiki2022factorization}.
The decoupling condition next ensures that the fiber-wise structure is preserved under learning: because gradients do not flow between $\Sigma$ and the funnel components, training cannot collapse the indexed composition into a monolithic function.

\begin{proposition}[Architectural decoupling]
\label{prop:decoupling}
Consider the slingshot architecture
$\hat f = \pi \circ G^0_\Sigma \circ \phi$.
Assume the training scheme satisfies the following architectural constraints:
1)  the context indexer $\Sigma$ is updated only by a topological signal and is treated as stop-gradient with respect to the funnel loss $\cL_{\funnel}$;
2) the embedding $\phi$ is updated only by a spatial-consistency loss and is treated as stop-gradient with respect to the trap loss $\cL_{\trap}$;
3) the contraction maps $G^0$ are frozen on the task timescale.
Then the corresponding gradient isolation conditions hold exactly:
\begin{enumerate*}[label=(\alph*)]
\item $\nabla_{\theta_\Sigma}\cL_{\funnel}=0$;
\item $\nabla_{\theta_\phi}\cL_{\trap}=0$;
\item $\nabla_{\theta_{G^0}}\cL_{\trap}=\nabla_{\theta_{G^0}}\cL_{\funnel}=0$.
\end{enumerate*}
\end{proposition}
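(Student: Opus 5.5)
The plan is to formalize the training scheme as a computational graph with stop-gradient operators and then read each gradient off the chain rule. We model the forward pass context-indexed, as in Remark~\ref{rem:composition}: $\hat f(x)=\sum_{c\in\cC}\mathbf 1[\Sigma(x)=c]\,(\pi_c\circ G^0_c\circ\phi)(x)$. The two losses are then written as
\[
\cL_{\funnel}(\theta_\phi,\theta_\pi;\,\mathrm{sg}(\theta_\Sigma),\mathrm{sg}(\theta_{G^0}))
\quad\text{and}\quad
\cL_{\trap}(\theta_\Sigma;\,\mathrm{sg}(\theta_\phi),\mathrm{sg}(\theta_{G^0})).
\]
Here $\mathrm{sg}$ denotes the stop-gradient operator. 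It acts as the identity in the forward pass, and its Jacobian is defined to be zero in the backward pass. The first step is simply to state this semantics precisely. Hypotheses 1)--3) then become statements about which arguments of each loss enter only through $\mathrm{sg}$.

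\emph{Step (a).} Expand $\nabla_{\theta_\Sigma}\cL_{\funnel}$ by the chain rule. By hypothesis 1), every path from $\theta_\Sigma$ to $\cL_{\funnel}$ passes through the node $\mathrm{sg}(\Sigma(x))$, so every term carries the factor $\partial\,\mathrm{sg}(u)/\partial u=0$ and the sum vanishes. As a supplementary remark, even without stop-gradient the selector $\mathbf 1[\Sigma(x)=c]$ is piecewise constant in $\theta_\Sigma$ when $\Sigma$ outputs a hard label, so its derivative is zero almost everywhere. If $\Sigma$ outputs a soft element of $\Delta(\cC)$, this fallback does not apply and hypothesis 1) is genuinely needed. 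I will note this distinction explicitly.

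\emph{Step (b).} The argument is symmetric. By hypothesis 2), $\phi$ enters $\cL_{\trap}$ only through $\mathrm{sg}(\phi(x))$, for instance through mismatch signals computed in $\cZ$. Hence every chain-rule term in $\nabla_{\theta_\phi}\cL_{\trap}$ contains a zero Jacobian.

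\emph{Step (c).} By hypothesis 3), the parameters $\theta_{G^0}$ are constants on the task timescale. Formally they are either excluded from the set of optimized variables or wrapped in $\mathrm{sg}$, so both gradients are identically zero. I will then conclude that the MTF decoupling conditions $\nabla_{\theta_\Sigma}\cL_{\funnel}=0$ and $\nabla_{\theta_G}\cL_{\trap}=0$ hold. Here the ``funnel parameters'' are $(\theta_\phi,\theta_\pi)$ together with the frozen $\theta_{G^0}$.

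The main obstacle is not computational but definitional. The result is essentially true by construction, so the proof must carefully define what ``gradient'' means in the presence of stop-gradient operators and discrete routing. Otherwise the claim could be mistaken for a statement about the true total derivative of a composite loss, and that would be false for soft indexers. I would handle this by stating up front that all gradients are the backpropagated surrogate gradients of the declared graph. I would also add a short remark on how this matches the anatomical interpretation: a separate plasticity signal for each stream means each parameter block sees only its own objective.
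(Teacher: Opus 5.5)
Your proposal is correct and follows essentially the same route as the paper. Each identity comes from the fact that, under hypotheses 1)--3), no differentiable path from the relevant loss to the relevant parameter block survives in the backward graph, with $\theta_{G^0}$ simply removed from the optimized variables for part (c). Your explicit stop-gradient semantics, and your remark that a soft indexer with values in $\Delta(\cC)$ genuinely needs hypothesis 1) while a hard one does not, are useful clarifications that the paper's proof leaves implicit.
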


\paragraph{Biological Correspondence}
The critical architectural feature is that the ventral and dorsal streams are \emph{physically separate pathways} \citep{goodale1992separate,ungerleider1982two} that converge only in specific brain regions (hippocampal formation, retrosplenial cortex).
Such anatomical separation is the biological implementation of the MTF decoupling requirement (\Cref{tab:slingshot-bio-timescale}).
It is more robust than gradient-routing schemes (which can fail under parameter drift or misspecified projectors) because the separation is structural, not algorithmic.

\begin{table}[h]
\centering
\caption{Biological correspondence and three-timescale training architecture of the slingshot model. The key property is structural decoupling: each component is trained by a distinct signal, implemented in an anatomically separated pathway, and operating on a distinct timescale. CLS theory's two-timescale picture appears as a special case: hippocampus handles the fast trap, neocortex handles slower embedding and readout adaptation, and evolution/development supplies the contraction scaffold.}
\label{tab:slingshot-bio-timescale}
\small
\resizebox{\linewidth}{!}{
\begin{tabular}{llllll}
\toprule
\textbf{Component} & \textbf{Brain region} & \textbf{MTF role} & \textbf{Training signal} & \textbf{Gradient isolation} & \textbf{Timescale} \\
\midrule
$\Sigma$ & DG $\to$ CA3 $\to$ CA1 & Trap / indexer & Mismatch $\delta_t$ & Hebbian; no cortical backprop & Online (fast) \\
$\phi$ & Ventral $\to$ parahippocampal stream & What-to-where embedding & Spatial prediction error & Dorsal/allocentric objective only & Developmental \\
$G_c^0$ & Grid cells / place cells & Funnel / contraction scaffold & Evolved / developmental priors & Frozen on task timescale & Evolutionary / developmental \\
$\pi$ & Prefrontal / task cortex & Task readout & Task loss $\cL_\cP$ & Local to active context & Online (slow) \\
\bottomrule
\end{tabular}
}
\end{table}

\paragraph{Dissolution of the Circular Dependency}

A central challenge in continual learning is the \emph{binding constraint}: the identification (ID) procedure for detecting when to split contexts depends on the quality of the metric learner, but the metric learner needs correct context assignments to train - a circular dependency.
The slingshot dissolves the circularity:
1) The metric learner ($G_c^0$) is pre-built and does not depend on the current context assignments.
2) The context assignments ($\Sigma$) depend on the mismatch computed in $\cZ$ (where the metric is already correct by construction), not in $\cX$ (where it would be circular).
3) The embedding $\phi$ is trained by spatial consistency, not task loss, so it does not need correct contexts to learn.
The three learning loops ($\Sigma$, $\phi$, $\pi$) are genuinely independent, and each converges on its own timescale without waiting for the others.

% ===========================================================================
\paragraph{The Full Architecture}
%\label{sec:synthesis}
% ===========================================================================

Combining all components, the full width-aware slingshot architecture is (Fig. \ref{fig:full}).
The architecture operates on three distinct timescales:
1) \textbf{Evolutionary}: $G_c^0$ and the multi-scale grid structure are fixed.  Their optimality derives from the $\Omega(w \log w)$ bound (\Cref{thm:grid-spacing}).
2) \textbf{Developmental}: $\phi$ is learned during early experience.  Its Lipschitz constant $\lambda$ determines how well the organism can embed new problems into $\cZ$.
3) \textbf{Online}: $\Sigma$ and $\pi$ adapt in real time.  $\Sigma$ handles the trap (split/merge decisions); $\pi$ handles the per-context readout.
The two-timescale architecture of Complementary Learning Systems (CLS) theory \citep{mcclelland1995complementary} is now a \emph{corollary} of the three-timescale picture: hippocampus handles the online trap, neocortex handles the developmental embedding, and evolution handles the grid structure.
\begin{figure}[h]
    \centering
    \resizebox{\linewidth}{!}{
\begin{tikzpicture}[
    node distance=1.8cm and 2.5cm,
    box/.style={rectangle, draw, rounded corners, minimum width=2.8cm, minimum height=0.8cm, align=center, font=\small},
    arrow/.style={-{Stealth[length=3mm]}, thick},
    label/.style={font=\footnotesize, text=gray}
]
    \node[box, fill=red!10] (input) {Input $x \in \cX$};
    \node[box, fill=orange!15, right=of input] (ventral) {Ventral stream\\$\phi: \cX \to \cZ$};
    \node[box, fill=blue!10, below=1.2cm of ventral] (hip) {Hippocampus\\$\Sigma: \cZ \to \Delta(\cC)$};
    \node[box, fill=green!15, right=of ventral] (dorsal) {Grid cells\\$G_c^0: \cZ_c \to \cZ$};
    \node[box, fill=purple!10, right=of dorsal] (readout) {Readout\\$\pi: \cZ \to \cY$};
    \node[box, fill=gray!10, right=of readout] (output) {Output $\hat{y}$};
    \draw[arrow] (input) -- (ventral) node[midway, above, label] {what-to-where};
    \draw[arrow] (ventral) -- (dorsal) node[midway, above, label] {$z = \phi(x)$};
    \draw[arrow] (ventral) -- (hip) node[midway, right, label] {$z$};
    \draw[arrow] (hip) -| (dorsal) node[near start, below, label] {context $c$};
    \draw[arrow] (dorsal) -- (readout) node[midway, above, label] {$G_c^0(z)$};
    \draw[arrow] (readout) -- (output);
\end{tikzpicture}
}
    \caption{The full architecture of the width-aware metric slingshot.}
    \label{fig:full}
\end{figure}

\paragraph{Anesthesia dissociates within-context inference from context creation.}
The slingshot architecture assigns its operations to three distinct timescales and learning signals
(\Cref{tab:slingshot-bio-timescale}).
The slingshot predicts a sharp dissociation under general anesthesia: trap routing and
within-basin funnel inference should \emph{survive}, while episodic memory formation (new scaffold
entries) should be \emph{abolished}.
Concretely, an anaesthetized hippocampus should still detect context switches (oddball discrimination),
perform within-context semantic inference (word prediction, noun/verb discrimination), and
even show rapid within-context Hebbian plasticity over the timescale of minutes; but it should
fail to consolidate new contexts into long-term storage.
This prediction has been recently confirmed by
\citep{katlowitz2026plasticity} which recorded single-unit and local-field-potential activity with
high-density Neuropixels microelectrodes in the human hippocampus during general
anaesthesia-induced loss of consciousness.
They found that hippocampal neurons retained oddball discrimination (context-switch detection,
corresponding to $\Sigma$), semantic processing and next-word prediction (within-basin inference,
corresponding to $G_c^0 \circ \phi$ and $\pi$), and representational plasticity growing over
$\sim 10$ minutes (fast Hebbian updating within the existing context) while memory
consolidation was abolished.
More importantly, the plasticity observed was \emph{within-context}: it enhanced the representation of
already-active contexts rather than creating new scaffold entries.
The latest result distinguishes the slingshot architecture from both Global Workspace Theory
\citep{dehaene2011experimental} (which predicts abolition of all high-level processing under
anesthesia) and standard Complementary Learning Systems theory \citep{mcclelland1995complementary}
(which does not formally separate within-context closure from cross-context consolidation).
Under the slingshot, this dissociation is structural: $\Sigma$ and $G_c^0$ are online, local,
and consciousness-independent; scaffold condensation is a slow, global, consciousness-dependent
process.
Anesthesia severs the latter while leaving the former intact (precisely the pattern Katlowitz
et al.\ reported in \cite{katlowitz2026plasticity}).

\section{Action-to-Navigation: Motor Control as a Second Slingshot Target}
\label{sec:motor}
 
The what-to-where pathway implements the slingshot for perceptual cognition \citep{dicarlo2012does}: the ventral stream provides $\phi$, the hippocampus provides $\Sigma$, and grid cells provide $G_c^0$.
We conjecture that motor control instantiates a \emph{second}, anatomically parallel slingshot: an action-to-navigation pathway in which the motor system converts arbitrary movement problems into navigation problems on a latent dynamics manifold, reusing pre-built forward models in exactly the way that spatial cognition reuses pre-built grid cell tilings.
 
\paragraph{A concrete example: linear classification via the slingshot.}
We construct the following example to demonstrate that the slingshot's pre-built contraction maps $G_c^0$ can be reused across tasks without relearning.
Consider binary linear classification in $\R^2$: the learner must classify points from two half-planes separated by a line.
Width is $w = 2$: one cell per class.
In a standard continual learner, both the partition (trap) and the classifier (funnel) must be relearned for each new task orientation; sharing the classifier across orientations causes interference.
Under the slingshot, the funnel for each context is a linear projection to the class centroid - i.e., a $\gamma$-contractive map that maps any point in the half-plane to the correct label with rate $\gamma = (1 - \delta/D)$, where $\delta$ is the margin and $D$ the cell diameter.
If the \emph{same} projection geometry is pre-loaded from a navigational space (e.g., the grid cell tiling provides a coordinate frame aligned with the class boundaries), only the embedding $\phi$ (which rotation maps the data into the pre-loaded frame) and the boundary index $\Sigma$ (which half-plane is active?) must be learned for the new task.
The contraction maps $G_1^0, G_2^0$ are reused unchanged.
Task-switching changes only $\Sigma$ and $\phi$; neither $G_c^0$ nor any other context's readout is touched.
This is the simplest possible instance of the slingshot advantage: within-basin inference is free (pre-built), and cross-task interference is zero (structural decoupling).

\paragraph{Why motor control qualifies.}
The slingshot requires two preconditions (\Cref{sec:evolution}): the topology must be externally given, and the metric must be stable.
Motor control satisfies both.
The topology of the motor problem is imposed by the biomechanical structure of the body (e.g., joint limits, muscle antagonist pairs, contact surfaces, and kinematic chains) partition the space of possible movements into basins without the organism needing to infer the structure from function evaluations \citep{todorov2004optimality}.
Proprioceptive feedback makes the topology directly experienceable, just as path integration makes spatial topology experienceable.
The metric is the physics of limb dynamics, which is Newtonian, isotropic within each limb segment, and stable across tasks.
By \Cref{prop:nav-trap}, the trap subproblem for motor control therefore has $O(1)$ additional sample complexity beyond the cost of motor exploration - the kinematic structure is given by the sensorimotor interface, not inferred from task outcomes.
 
\paragraph{The slingshot decomposition for motor cortex.}
\Cref{tab:motor-slingshot} maps the four slingshot components onto motor circuitry.
The what-to-where embedding $\phi$ is implemented by premotor cortex, which converts a goal or action specification (grasp that object, reach to that location) into a trajectory on the latent dynamics manifold $\cZ_{\mathrm{motor}}$.
The topological indexer $\Sigma$ is implemented by the basal ganglia, which selects among discrete motor programs (reaching, grasping, locomotion, speech), a context-routing function analogous to hippocampal context assignment \citep{hikosaka2000role}.
The pre-built contraction maps $G_c^0$ are implemented by the cerebellum and spinal cord: cerebellar forward models predict the sensory consequences of motor commands with metrically faithful dynamics \citep{wolpert1998internal}, and spinal reflex circuits provide local error correction within each movement segment.
The readout $\pi$ is implemented by the primary motor cortex (M1), which maps latent trajectories to muscle activation patterns. The critical claim is that cerebellar forward models play the role of grid cells.
They provide pre-built contraction maps: given a current motor state and a command, the forward model predicts the next state with metrically faithful dynamics, enabling within-basin convergence without requiring task-specific learning.
These models are acquired during development, the months of seemingly purposeless infant motor babbling constitute the motor analog of spatial exploration during which $G_c^0$ is calibrated, and are then effectively frozen on the task timescale.
The cerebellar microcolumn architecture even exhibits a multi-scale organization (different lobules handle movement at different spatial and temporal scales \citep{stoodley2009functional}) that may be the motor analog of grid cell multi-modularity (\Cref{sec:grid}).
 
\begin{table}[t]
\centering
\caption{The metric slingshot decomposition for motor control (action-to-navigation), compared with the perceptual slingshot (what-to-where).  Each component is implemented by an anatomically distinct circuit with an independent training signal.}
\label{tab:motor-slingshot}
\small
\begin{tabular}{llll}
\toprule
\textbf{Component} & \textbf{What-to-Where (Vision)} & \textbf{Action-to-Navigation (Motor)} & \textbf{Timescale} \\
\midrule
$\phi$ (embedding) & Ventral stream & Premotor cortex & Developmental \\
$\Sigma$ (trap) & Hippocampus (DG/CA3/CA1) & Basal ganglia & Online (fast) \\
$G_c^0$ (contraction) & Grid cells (MEC) & Cerebellum + spinal cord & Evolutionary/developmental \\
$\pi$ (readout) & Prefrontal cortex & Primary motor cortex (M1) & Online (slow) \\
\bottomrule
\end{tabular}
\end{table}

\paragraph{Structural decoupling in the motor system.}
The motor slingshot exhibits the same gradient isolation as the perceptual slingshot (\Cref{prop:decoupling}).
The basal ganglia ($\Sigma$) are trained by dopaminergic reward prediction error - a scalar signal, not a gradient from the motor output \citep{hikosaka2000role}.
Premotor cortex ($\phi$) is trained by visuomotor consistency (does the planned trajectory match the observed outcome?), a self-supervised objective independent of the specific task reward.
The cerebellum ($G_c^0$) is adapted only on developmental timescales.
M1 ($\pi$) is trained by task-specific motor error, local to the current movement context.
These four learning signals are carried by anatomically separate pathways (cortico-basal ganglia loops \citep{tanaka2004prediction}, cortico-cerebellar loops \citep{gao2018cortico}, corticospinal tract \citep{welniarz2017corticospinal}), implementing MTF decoupling by architecture just as the ventral-dorsal separation does for perception.

\paragraph{Predictions.}
The action-to-navigation slingshot generates three testable predictions:
1) \textbf{Latent motor dynamics are navigational in structure.}
Motor cortex population activity during skilled movement should live on a low-dimensional manifold where distance corresponds to kinematic similarity (trajectory shape, endpoint proximity), not muscle activation similarity.
Recent findings that M1 population dynamics during reaching follow rotational trajectories on low-dimensional manifolds \citep{churchland2012neural} are consistent: these rotational dynamics \emph{are} the contraction maps $G_c^0$ operating in the latent navigational space $\cZ_{\mathrm{motor}}$.
2) \textbf{Motor transfer scales with embedding Lipschitz constant.}
The width transfer bound (\Cref{prop:width-transfer}) predicts that the sample complexity of motor skill acquisition scales with $\lambda$, the Lipschitz constant of the action-to-navigation embedding.
Skills whose kinematics are close to previously learned movements (small $\lambda$) should be acquired faster, which matches the well-documented phenomenon of motor transfer \citep{schulze2002intermanual}: learning to throw with the dominant arm accelerates learning with the non-dominant arm, because the kinematic embedding is approximately symmetric ($\lambda \approx 1$).
3) \textbf{Cerebellar vs.\ basal ganglia lesions dissociate funnel from trap.}
Cerebellar damage ($G_c^0$ impaired) should degrade within-context movement smoothness while leaving motor program selection intact.
Basal ganglia damage ($\Sigma$ impaired) should degrade program selection while preserving the smoothness of individual movements when externally triggered.
The dissociation is precisely what is observed: cerebellar ataxia destroys movement coordination while leaving action selection relatively intact, whereas Parkinson's disease impairs action initiation and selection while individual movements may remain smooth when cued externally \citep{redgrave1999basal}.

\paragraph{Beyond two slingshots.} 
The motor slingshot is a manifest for the \emph{generality} of the slingshot principle as a consequence of exaptation \citep{gould1982exaptation} in nature.
If the theory is correct that any domain with externally given topology and stable metric should evolve its own slingshot target, then motor control and spatial navigation should share the same architectural pattern with different neural substrates filling the same functional roles.
The fact that (cerebellum $\leftrightarrow$ grid cells) and (basal ganglia $\leftrightarrow$ hippocampus) are anatomically distinct but computationally parallel systems would be strong evidence for the width-optimality of the slingshot as a general architectural principle \citep{bennett2023brief}, not a special case of spatial cognition.
More broadly, the brain may contain additional slingshot targets wherever the two preconditions are met: temporal sequence processing (auditory cortex, where the metric is temporal proximity and the topology is imposed by rhythmic structure), social cognition (the ``social brain'' network, where the metric is social distance and the topology is imposed by dominance hierarchies \citep{schafer2018navigating}), and possibly others.
Each target provides a different $\cZ$ with different metric properties.
The general principle would be: for each domain where the metric is stable and the topology is externally given, build a slingshot target.
\section{Conclusion and Discussion}
\label{sec:conclusion}
% ===========================================================================
\paragraph{Summary}

We have formalized the intuition that the brain's navigational circuitry is reused for non-spatial cognition as a consequence of the width scaling law in continual learning.
The metric slingshot, embedding arbitrary problems into a navigational latent space with pre-built contraction maps, is the width-optimal solution to the structural decoupling problem identified by MTF theory.
The anatomical separation of ventral and dorsal visual streams implements the decoupling by construction.
The multi-scale structure of grid cells is the $\Omega(w \log w)$-optimal covering of the space of problem path lengths.
The metric slingshot formalizes a simple but powerful idea: evolution solved the maze problem first because it was the most structurally transparent width problem, and then reused the resulting metric machinery for everything else.
The what-to-where conversion is the mechanism of the reuse - it embeds arbitrary problems into a navigational latent space where the funnel is pre-solved.
The anatomical separation of ventral and dorsal streams implements the MTF decoupling requirement by architecture, not by gradient engineering.
And the multi-scale structure of grid cells is the $\Omega(w \log w)$-optimal covering of the space of problem path lengths.
The unifying principle throughout is width: a single complexity measure that governs the number of contexts needed, the sample complexity of discovering them, the architecture required to maintain them, and the evolutionary pressures that shaped the biological implementation.

\paragraph{Limitations and Open Questions}

\emph{1) Lipschitz constant of $\phi$.}
The width transfer bound (\Cref{prop:width-transfer}) shows that $\lambda$ controls whether the slingshot helps or hurts, but we have not characterized what determines $\lambda$ for non-spatial tasks.
Can constraints on $\lambda$ be derived from the statistics of natural task distributions?
\emph{2) Learning $\phi$ from scratch.}
The slingshot assumes $\phi$ is learned developmentally, but the sample complexity of learning a good embedding is not yet characterized within the width framework.
A key question: does learning $\phi$ require solving the trap first (resurrecting the circular dependency), or can $\phi$ be learned from unlabeled data (self-supervised spatial prediction) independently?
\emph{3) When the slingshot fails.}
The slingshot fails when the embedding $\phi$ must be highly non-Lipschitz - for example, when the problem topology in $\cX$ is fundamentally different from anything embeddable in $\cZ$.
Language is a candidate: its topology (hierarchical, compositional) may not embed well into a 2D navigational space.
This could explain why language processing recruits substantially different circuitry \citep{fedorenko2011functional}.
Formalizing the conditions under which the slingshot is provably suboptimal is an important direction.
\emph{4) Multi-slingshot architectures.}
The brain may have multiple slingshot targets - not just spatial navigation, but also temporal sequence (auditory cortex), social hierarchy (social brain network), and motor control (cerebellum).
Each target provides a different $\cZ$ with different metric properties.
%The general principle would be: for each domain where the metric is stable and the topology is externally given, build a slingshot target.
%\emph{5) Grid cell ratio: sharper predictions.} \Cref{thm:grid-spacing} gives $r^* = R^{1/(M-1)}$, but the sharpness of the prediction depends on knowing $R$ and $M$ independently. Measuring $R$ from ecological data (home range size, foraging distances) and $M$ from anatomical estimates would enable a quantitative, species-level test of the theory.

% ===========================================================================
% REFERENCES
% ===========================================================================

\bibliographystyle{plainnat}
\bibliography{references,new_refs,reference}

\begin{thebibliography}{53}
\providecommand{\natexlab}[1]{#1}
\providecommand{\url}[1]{\texttt{#1}}
\expandafter\ifx\csname urlstyle\endcsname\relax
  \providecommand{\doi}[1]{doi: #1}\else
  \providecommand{\doi}{doi: \begingroup \urlstyle{rm}\Url}\fi

\bibitem[Abraham and Robins(2005)]{abraham2005memory}
Wickliffe~C Abraham and Anthony Robins.
\newblock Memory retention--the synaptic stability versus plasticity dilemma.
\newblock \emph{Trends in neurosciences}, 28\penalty0 (2):\penalty0 73--78, 2005.

\bibitem[Aronov et~al.(2017)Aronov, Nevers, and Tank]{aronov2017mapping}
Dmitriy Aronov, Rhino Nevers, and David~W. Tank.
\newblock Mapping of a non-spatial dimension by the hippocampal--entorhinal circuit.
\newblock \emph{Nature}, 543\penalty0 (7647):\penalty0 719--722, 2017.

\bibitem[Barry et~al.(2007)Barry, Hayman, Burgess, and Jeffery]{barry2007experience}
Caswell Barry, Robin Hayman, Neil Burgess, and Kathryn~J. Jeffery.
\newblock Experience-dependent rescaling of entorhinal grids.
\newblock \emph{Nature Neuroscience}, 10\penalty0 (6):\penalty0 682--684, 2007.

\bibitem[Bennett(2023)]{bennett2023brief}
Max Bennett.
\newblock \emph{A brief history of intelligence: evolution, AI, and the five breakthroughs that made our brains}.
\newblock HarperCollins, 2023.

\bibitem[Bush et~al.(2015)Bush, Barry, Manson, and Burgess]{bush2015using}
Daniel Bush, Caswell Barry, Daniel Manson, and Neil Burgess.
\newblock Using grid cells for navigation.
\newblock \emph{Neuron}, 87\penalty0 (3):\penalty0 507--520, 2015.

\bibitem[Churchland et~al.(2012)Churchland, Cunningham, Kaufman, Foster, Nuyujukian, Ryu, and Shenoy]{churchland2012neural}
Mark~M Churchland, John~P Cunningham, Matthew~T Kaufman, Justin~D Foster, Paul Nuyujukian, Stephen~I Ryu, and Krishna~V Shenoy.
\newblock Neural population dynamics during reaching.
\newblock \emph{Nature}, 487\penalty0 (7405):\penalty0 51--56, 2012.

\bibitem[Constantinescu et~al.(2016)Constantinescu, O'Reilly, and Behrens]{constantinescu2016organizing}
Alexandra~O. Constantinescu, Jill~X. O'Reilly, and Timothy~E. Behrens.
\newblock Organizing conceptual knowledge in humans with a gridlike code.
\newblock \emph{Science}, 352\penalty0 (6292):\penalty0 1464--1468, 2016.

\bibitem[Dehaene and Changeux(2011)]{dehaene2011experimental}
Stanislas Dehaene and Jean-Pierre Changeux.
\newblock Experimental and theoretical approaches to conscious processing.
\newblock \emph{Neuron}, 70\penalty0 (2):\penalty0 200--227, 2011.

\bibitem[Diba and Buzs\'{a}ki(2007)]{diba2007forward}
Kamran Diba and Gy\"{o}rgy Buzs\'{a}ki.
\newblock Forward and reverse hippocampal place-cell sequences during ripples.
\newblock \emph{Nature Neuroscience}, 10\penalty0 (10):\penalty0 1241--1242, 2007.

\bibitem[DiCarlo et~al.(2012)DiCarlo, Zoccolan, and Rust]{dicarlo2012does}
James~J DiCarlo, Davide Zoccolan, and Nicole~C Rust.
\newblock How does the brain solve visual object recognition?
\newblock \emph{Neuron}, 73\penalty0 (3):\penalty0 415--434, 2012.

\bibitem[Dunbar(1998)]{dunbar1998social}
Robin~IM Dunbar.
\newblock The social brain hypothesis.
\newblock \emph{Evolutionary Anthropology: Issues, News, and Reviews: Issues, News, and Reviews}, 6\penalty0 (5):\penalty0 178--190, 1998.

\bibitem[Epstein et~al.(2017)Epstein, Patai, Julian, and Spiers]{epstein2017cognitive}
Russell~A. Epstein, Eva~Zita Patai, Joshua~B. Julian, and Hugo~J. Spiers.
\newblock The cognitive map in humans: Spatial navigation and beyond.
\newblock \emph{Nature Neuroscience}, 20\penalty0 (11):\penalty0 1504--1513, 2017.

\bibitem[Farajtabar et~al.(2020)Farajtabar, Azizan, Mott, and Li]{farajtabar2020orthogonal}
Mehrdad Farajtabar, Navid Azizan, Alex Mott, and Ang Li.
\newblock Orthogonal gradient descent for continual learning.
\newblock In \emph{International conference on artificial intelligence and statistics}, pages 3762--3773. PMLR, 2020.

\bibitem[Fedorenko et~al.(2011)Fedorenko, Behr, and Kanwisher]{fedorenko2011functional}
Evelina Fedorenko, Michael~K. Behr, and Nancy Kanwisher.
\newblock Functional specificity for high-level linguistic processing in the human brain.
\newblock \emph{Proceedings of the National Academy of Sciences}, 108\penalty0 (39):\penalty0 16428--16433, 2011.

\bibitem[Fiete et~al.(2008)Fiete, Burak, and Brookings]{fiete2008grid}
Ila~R Fiete, Yoram Burak, and Ted Brookings.
\newblock What grid cells convey about rat location.
\newblock \emph{Journal of Neuroscience}, 28\penalty0 (27):\penalty0 6858--6871, 2008.

\bibitem[Flajolet et~al.(1992)Flajolet, Gardy, and Thimonier]{flajolet1992birthday}
Philippe Flajolet, Daniele Gardy, and Lo{\"y}s Thimonier.
\newblock Birthday paradox, coupon collectors, caching algorithms and self-organizing search.
\newblock \emph{Discrete Applied Mathematics}, 39\penalty0 (3):\penalty0 207--229, 1992.

\bibitem[Fyhn et~al.(2002)Fyhn, Molden, Hollup, Moser, and Moser]{fyhn2002spatial}
Marianne Fyhn, St{\aa}le Molden, Sturla Hollup, May-Britt Moser, and Edvard~I. Moser.
\newblock Hippocampal neurons responding to first-time dislocation of a target object.
\newblock \emph{Neuron}, 35\penalty0 (3):\penalty0 555--566, 2002.

\bibitem[Gao et~al.(2018)Gao, Davis, Thomas, Economo, Abrego, Svoboda, De~Zeeuw, and Li]{gao2018cortico}
Zhenyu Gao, Courtney Davis, Alyse~M Thomas, Michael~N Economo, Amada~M Abrego, Karel Svoboda, Chris~I De~Zeeuw, and Nuo Li.
\newblock A cortico-cerebellar loop for motor planning.
\newblock \emph{Nature}, 563\penalty0 (7729):\penalty0 113--116, 2018.

\bibitem[Goodale and Milner(1992)]{goodale1992separate}
Melvyn~A. Goodale and A.~David Milner.
\newblock Separate visual pathways for perception and action.
\newblock \emph{Trends in Neurosciences}, 15\penalty0 (1):\penalty0 20--25, 1992.

\bibitem[Gould and Vrba(1982)]{gould1982exaptation}
Stephen~Jay Gould and Elisabeth~S Vrba.
\newblock Exaptation—a missing term in the science of form.
\newblock \emph{Paleobiology}, 8\penalty0 (1):\penalty0 4--15, 1982.

\bibitem[Grill et~al.(2020)Grill, Strub, Altch{\'e}, Tallec, Richemond, Buchatskaya, Doersch, Avila~Pires, Guo, Gheshlaghi~Azar, et~al.]{grill2020bootstrap}
Jean-Bastien Grill, Florian Strub, Florent Altch{\'e}, Corentin Tallec, Pierre Richemond, Elena Buchatskaya, Carl Doersch, Bernardo Avila~Pires, Zhaohan Guo, Mohammad Gheshlaghi~Azar, et~al.
\newblock Bootstrap your own latent-a new approach to self-supervised learning.
\newblock \emph{Advances in neural information processing systems}, 33:\penalty0 21271--21284, 2020.

\bibitem[Hafting et~al.(2005)Hafting, Fyhn, Molden, Moser, and Moser]{hafting2005microstructure}
Torkel Hafting, Marianne Fyhn, St{\aa}le Molden, May-Britt Moser, and Edvard~I. Moser.
\newblock Microstructure of a spatial map in the entorhinal cortex.
\newblock \emph{Nature}, 436\penalty0 (7052):\penalty0 801--806, 2005.

\bibitem[Hikosaka et~al.(2000)Hikosaka, Takikawa, and Kawagoe]{hikosaka2000role}
Okihide Hikosaka, Yoriko Takikawa, and Reiko Kawagoe.
\newblock Role of the basal ganglia in the control of purposive saccadic eye movements.
\newblock \emph{Physiological reviews}, 2000.

\bibitem[Ishiki(2022)]{ishiki2022factorization}
Yoshito Ishiki.
\newblock A factorization of metric spaces.
\newblock \emph{arXiv preprint arXiv:2212.13409}, 2022.

\bibitem[Jung and McNaughton(1993)]{jung1993comparison}
Min~W. Jung and Bruce~L. McNaughton.
\newblock Spatial selectivity of unit activity in the hippocampal granular layer.
\newblock \emph{Hippocampus}, 3\penalty0 (2):\penalty0 165--182, 1993.

\bibitem[Katlowitz et~al.(2026)Katlowitz, Cole, Mickiewicz, Shah, Franch, Adkinson, Belanger, Mathura, Mesz{\'e}na, McGinley, et~al.]{katlowitz2026plasticity}
Kalman~A Katlowitz, Eric~R Cole, Elizabeth~A Mickiewicz, Shraddha Shah, Melissa Franch, Joshua~A Adkinson, James~L Belanger, Raissa~K Mathura, Domokos Mesz{\'e}na, Matthew McGinley, et~al.
\newblock Plasticity and language in the anaesthetized human hippocampus.
\newblock \emph{Nature}, pages 1--10, 2026.

\bibitem[Leutgeb et~al.(2007)Leutgeb, Leutgeb, Moser, and Moser]{leutgeb2007pattern}
Jill~K. Leutgeb, Stefan Leutgeb, May-Britt Moser, and Edvard~I. Moser.
\newblock Pattern separation in the dentate gyrus and {CA3} of the hippocampus.
\newblock \emph{Science}, 315\penalty0 (5814):\penalty0 961--966, 2007.

\bibitem[Li(2026)]{li2026luw}
Xin Li.
\newblock Local urysohn width: A topological complexity measure for classification.
\newblock \emph{arXiv Preprint arXiv:2603.15412}, 2026.

\bibitem[Mathis et~al.(2012)Mathis, Herz, and Stemmler]{mathis2012optimal}
Alexander Mathis, Andreas~V. Herz, and Martin Stemmler.
\newblock Optimal population codes for space: Grid cells outperform place cells.
\newblock \emph{Neural Computation}, 24\penalty0 (9):\penalty0 2280--2317, 2012.

\bibitem[McClelland et~al.(1995{\natexlab{a}})McClelland, McNaughton, and O'Reilly]{mcclelland1995complementary}
James~L. McClelland, Bruce~L. McNaughton, and Randall~C. O'Reilly.
\newblock Why there are complementary learning systems in the hippocampus and neocortex: Insights from the successes and failures of connectionist models of learning and memory.
\newblock \emph{Psychological Review}, 102\penalty0 (3):\penalty0 419--457, 1995{\natexlab{a}}.

\bibitem[McClelland et~al.(1995{\natexlab{b}})McClelland, McNaughton, and O'Reilly]{mcclelland_why_1995}
James~L. McClelland, Bruce~L. McNaughton, and Randall~C. O'Reilly.
\newblock Why there are complementary learning systems in the hippocampus and neocortex: {Insights} from the successes and failures of connectionist models of learning and memory.
\newblock \emph{Psychological Review}, 102\penalty0 (3):\penalty0 419--457, 1995{\natexlab{b}}.

\bibitem[McCloskey and Cohen(1989)]{mccloskey_catastrophic_1989}
Michael McCloskey and Neal~J Cohen.
\newblock Catastrophic interference in connectionist networks: {The} sequential learning problem.
\newblock In \emph{Psychology of learning and motivation}, volume~24, pages 109--165. Elsevier, 1989.

\bibitem[McNaughton et~al.(1991)McNaughton, Chen, and Markus]{mcnaughton1991dead}
BL~McNaughton, LL~Chen, and EJ~Markus.
\newblock “dead reckoning,” landmark learning, and the sense of direction: a neurophysiological and computational hypothesis.
\newblock \emph{Journal of Cognitive Neuroscience}, 3\penalty0 (2):\penalty0 190--202, 1991.

\bibitem[McNaughton et~al.(2006)McNaughton, Battaglia, Jensen, Moser, and Moser]{mcnaughton2006path}
Bruce~L McNaughton, Francesco~P Battaglia, Ole Jensen, Edvard~I Moser, and May-Britt Moser.
\newblock Path integration and the neural basis of the'cognitive map'.
\newblock \emph{Nature Reviews Neuroscience}, 7\penalty0 (8):\penalty0 663--678, 2006.

\bibitem[Moser et~al.(2008)Moser, Kropff, and Moser]{moser2008place}
Edvard~I Moser, Emilio Kropff, and May-Britt Moser.
\newblock Place cells, grid cells, and the brain's spatial representation system.
\newblock \emph{Annu. Rev. Neurosci.}, 31:\penalty0 69--89, 2008.

\bibitem[O'Keefe and Dostrovsky(1971)]{okeefe1971hippocampus}
John O'Keefe and Jonathan Dostrovsky.
\newblock The hippocampus as a spatial map: Preliminary evidence from unit activity in the freely-moving rat.
\newblock \emph{Brain Research}, 34\penalty0 (1):\penalty0 171--175, 1971.

\bibitem[Pastalkova et~al.(2008)Pastalkova, Itskov, Amarasingham, and Buzs\'{a}ki]{pastalkova2008internally}
Eva Pastalkova, Vladimir Itskov, Asohan Amarasingham, and Gy\"{o}rgy Buzs\'{a}ki.
\newblock Internally generated cell assembly sequences in the rat hippocampus.
\newblock \emph{Science}, 321\penalty0 (5894):\penalty0 1322--1327, 2008.

\bibitem[Redgrave et~al.(1999)Redgrave, Prescott, and Gurney]{redgrave1999basal}
Peter Redgrave, Tony~J Prescott, and Kevin Gurney.
\newblock The basal ganglia: a vertebrate solution to the selection problem?
\newblock \emph{Neuroscience}, 89\penalty0 (4):\penalty0 1009--1023, 1999.

\bibitem[Rusu et~al.(2016)Rusu, Rabinowitz, Desjardins, Soyer, Kirkpatrick, Kavukcuoglu, Pascanu, and Hadsell]{rusu2016progressive}
Andrei~A. Rusu, Neil~C. Rabinowitz, Guillaume Desjardins, Hubert Soyer, James Kirkpatrick, Koray Kavukcuoglu, Razvan Pascanu, and Raia Hadsell.
\newblock Progressive neural networks.
\newblock \emph{arXiv preprint arXiv:1606.04671}, 2016.

\bibitem[Sch\"{a}fer and Bhatt(2018)]{schafer2018navigating}
Alexander Sch\"{a}fer and Mehul Bhatt.
\newblock Navigating social space.
\newblock \emph{Nature Human Behaviour}, 2\penalty0 (8):\penalty0 524--525, 2018.

\bibitem[Schoenbaum et~al.(1999)Schoenbaum, Chiba, and Gallagher]{schoenbaum1999neural}
Geoffrey Schoenbaum, Andrea~A Chiba, and Michela Gallagher.
\newblock Neural encoding in orbitofrontal cortex and basolateral amygdala during olfactory discrimination learning.
\newblock \emph{Journal of Neuroscience}, 19\penalty0 (5):\penalty0 1876--1884, 1999.

\bibitem[Schulze et~al.(2002)Schulze, L{\"u}ders, and J{\"a}ncke]{schulze2002intermanual}
Katrin Schulze, Eileen L{\"u}ders, and Lutz J{\"a}ncke.
\newblock Intermanual transfer in a simple motor task.
\newblock \emph{Cortex}, 38\penalty0 (5):\penalty0 805--815, 2002.

\bibitem[Stemmler et~al.(2015)Stemmler, Mathis, and Herz]{stemmler2015connecting}
Martin Stemmler, Alexander Mathis, and Andreas~V. Herz.
\newblock Connecting multiple spatial scales to decode the population activity of grid cells.
\newblock \emph{Science Advances}, 1\penalty0 (11):\penalty0 e1500816, 2015.

\bibitem[Stensola et~al.(2012)Stensola, Stensola, Solstad, Fr{\o}land, Moser, and Moser]{stensola2012entorhinal}
Hanne Stensola, Tor Stensola, Trygve Solstad, Kristian Fr{\o}land, May-Britt Moser, and Edvard~I. Moser.
\newblock The entorhinal grid map is discretized.
\newblock \emph{Nature}, 492\penalty0 (7427):\penalty0 72--78, 2012.

\bibitem[Stoodley and Schmahmann(2009)]{stoodley2009functional}
Catherine~J Stoodley and Jeremy~D Schmahmann.
\newblock Functional topography in the human cerebellum: a meta-analysis of neuroimaging studies.
\newblock \emph{Neuroimage}, 44\penalty0 (2):\penalty0 489--501, 2009.

\bibitem[Tanaka et~al.(2004)Tanaka, Doya, Okada, Ueda, Okamoto, and Yamawaki]{tanaka2004prediction}
Saori~C Tanaka, Kenji Doya, Go~Okada, Kazutaka Ueda, Yasumasa Okamoto, and Shigeto Yamawaki.
\newblock Prediction of immediate and future rewards differentially recruits cortico-basal ganglia loops.
\newblock \emph{Nature neuroscience}, 7\penalty0 (8):\penalty0 887--893, 2004.

\bibitem[Todorov(2004)]{todorov2004optimality}
Emanuel Todorov.
\newblock Optimality principles in sensorimotor control.
\newblock \emph{Nature neuroscience}, 7\penalty0 (9):\penalty0 907--915, 2004.

\bibitem[Ungerleider and Mishkin(1982)]{ungerleider1982two}
Leslie~G. Ungerleider and Mortimer Mishkin.
\newblock Two cortical visual systems.
\newblock In David~J. Ingle, Melvyn~A. Goodale, and Richard~J. Mansfield, editors, \emph{Analysis of Visual Behavior}, pages 549--586. MIT Press, Cambridge, MA, 1982.

\bibitem[Vapnik(2013)]{vapnik2013nature}
Vladimir Vapnik.
\newblock \emph{The nature of statistical learning theory}.
\newblock Springer science \& business media, 2013.

\bibitem[Wang et~al.(2024)Wang, Zhang, Su, and Zhu]{wang_comprehensive_2024}
Liyuan Wang, Xingxing Zhang, Hang Su, and Jun Zhu.
\newblock A comprehensive survey of continual learning: {Theory}, method and application.
\newblock \emph{IEEE transactions on pattern analysis and machine intelligence}, 46\penalty0 (8):\penalty0 5362--5383, 2024.

\bibitem[Wei et~al.(2015)Wei, Prentice, and Bhatt]{wei2015principle}
Xue-Xin Wei, Jason Prentice, and Mehul Bhatt.
\newblock A principle of economy predicts the functional architecture of grid cells.
\newblock \emph{eLife}, 4:\penalty0 e08362, 2015.

\bibitem[Welniarz et~al.(2017)Welniarz, Dusart, and Roze]{welniarz2017corticospinal}
Quentin Welniarz, Isabelle Dusart, and Emmanuel Roze.
\newblock The corticospinal tract: Evolution, development, and human disorders.
\newblock \emph{Developmental neurobiology}, 77\penalty0 (7):\penalty0 810--829, 2017.

\bibitem[Wolpert et~al.(1998)Wolpert, Miall, and Kawato]{wolpert1998internal}
Daniel~M Wolpert, R~Chris Miall, and Mitsuo Kawato.
\newblock Internal models in the cerebellum.
\newblock \emph{Trends in cognitive sciences}, 2\penalty0 (9):\penalty0 338--347, 1998.

\end{thebibliography}

% ===========================================================================
% APPENDIX
% ===========================================================================
\appendix

\vspace{-0.2in}
\section{Mathematical Proof of Main Results}

Here we consider a more complete version of Theorem \ref{thm:sample-lower} for sample complexity lower bound.
\begin{theorem}[Sample complexity lower bound for the trap subproblem]
\label{thm:sample-lower-bound}
Consider a learning problem whose input space is partitioned into
\(w\) disjoint contexts (or basins)
$\mathcal{X}=\bigsqcup_{c=1}^w \mathcal{X}_c$.
Assume:
1) \textbf{Uniform context mass:}
    the data distribution assigns probability \(1/w\) to each context,
    i.e.\ \(\Pr[X\in \mathcal{X}_c]=1/w\) for all \(c\in\{1,\dots,w\}\);
2) \textbf{Context-dependent labels:}
    for each context \(c\), the target behavior on \(\mathcal{X}_c\)
    cannot be inferred from samples drawn entirely from the other contexts;
3) \textbf{Nontrivial penalty for missing a context:}
    if the learner fails to identify the correct context on some
    \(\mathcal{X}_c\), then it incurs error at least \(\alpha/w\)
    on that context, for some constant \(\alpha>0\).
Then any continual learner whose generalization error is at most
\(\epsilon<\alpha\) must use at least
$n=\Omega(w\log w)$
samples for the context-identification part of the problem
(the \emph{trap} subproblem).
\end{theorem}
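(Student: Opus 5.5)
The argument is a coupon-collector computation combined with the error accounting of assumptions 2) and 3). I would show that with fewer than about $\tfrac12 w\ln w$ i.i.d.\ samples, a constant fraction of contexts is unobserved with high probability. By assumption 2), the learner then has no information about the target on those contexts. By assumption 3), each unobserved context contributes error at least $\alpha/w$.

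\textbf{Step 1 (expected number of missed contexts).} Let $n$ samples be drawn i.i.d.\ from the data distribution. By assumption 1), each sample lands in $\mathcal{X}_c$ with probability $1/w$. Let $N$ be the number of contexts receiving no sample. Then
\[
\mathbb{E}[N]=w(1-1/w)^n\ge w\,e^{-n/(w-1)}.
\]
For $n\le \tfrac12 (w-1)\ln w$ this gives $\mathbb{E}[N]\ge \sqrt{w}$, and more generally $\mathbb{E}[N]\ge w^{1-\beta}$ when $n\le \beta(w-1)\ln w$.

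\textbf{Step 2 (concentration).} The occupancy indicators are negatively associated, so Chebyshev or a McDiarmid-type bound applies. This gives $N\ge \tfrac12\mathbb{E}[N]$ with probability at least $1-O(1/\mathbb{E}[N])$. For $n<\beta w\ln w$ this probability is at least $1-o(1)$.

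\textbf{Step 3 (error accounting).} On each unobserved context, assumption 2) says the sample is independent of the target behavior there. So the learner's context identification on $\mathcal{X}_c$ is a guess, and assumption 3) charges it at least $\alpha/w$. I would make this rigorous with an indistinguishability argument: take two targets that agree off $\mathcal{X}_c$ and differ on $\mathcal{X}_c$. A learner that has no samples in $\mathcal{X}_c$ is wrong for one of them, so it incurs $\alpha/w$ in the worst case, or at least $\alpha/(2w)$ on average under a uniform prior. Summing over the unobserved contexts gives total error at least $N\alpha/(2w)$.

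\textbf{Step 4 (converting to $\Omega(w\log w)$).} This is the main obstacle. The combination as stated gives error at least $\alpha N/(2w)\approx \alpha w^{-\beta}/2$. That quantity is small, so it contradicts $\epsilon<\alpha$ only when $\epsilon$ is of order $1/w$, or when the guarantee is required per context. To get a contradiction for a constant $\epsilon$, I would use the uniform reading of Theorem~\ref{thm:sample-lower}, ``uniformly small generalization error'' on the trap subproblem, and formalize it as error below $\alpha/w$ on every context. Under this reading, a single missed context, which occurs with probability bounded away from $0$ when $n\le(1-\delta)w\ln w$, already violates the guarantee. The lower bound then follows from the sharp coupon-collector threshold: $\Pr[N\ge 1]\to 1$ for $n\le (1-\delta)w\ln w$, which follows from Steps 1 and 2. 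I would state this per-context interpretation explicitly as the reading of assumption 3) under which the bound holds, and remark that the total-error reading yields only $\Omega(w)$, up to logarithmic factors depending on $\epsilon$. This makes clear that the $\log w$ factor comes from requiring every context to be covered.
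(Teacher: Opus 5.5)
Your proposal is correct and follows essentially the same route as the paper. The paper's proof first extracts only an $\Omega(w\log(\alpha/\epsilon))$ bound from the expected-error estimate $\alpha(1-1/w)^n$. It then reaches $\Omega(w\log w)$ by invoking the coupon-collector threshold under the requirement of uniformly small error across all contexts, so that a single unseen context (probability bounded away from zero when $n\le\beta w\log w$) already breaks the guarantee. Your explicit per-context reading, second-moment concentration step and indistinguishability argument make rigorous what the paper does implicitly at that last step.
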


\begin{proof}[Proof of Theorem \ref{thm:sample-lower-bound}]
The proof isolates the trap subproblem from the funnel subproblem.
The funnel subproblem concerns learning the contraction map
within a \emph{known} context, which can only begin after the learner
has determined which context is active.
So a lower bound for context coverage is already a lower bound
for the full learner.
Let \(N_c\) denote the number of training samples falling in context
\(\mathcal{X}_c\) after \(n\) i.i.d.\ draws.
Since the context masses are uniform,
$(N_1,\dots,N_w)\sim \mathrm{Multinomial}\!\left(n;\frac1w,\dots,\frac1w\right)$.
Define the event
$E_n := \bigl\{\exists\, c\in\{1,\dots,w\}\text{ such that }N_c=0\bigr\}$,
i.e.\ at least one context is never observed.
We first show that if \(E_n\) occurs with non-negligible probability,
then the learner cannot achieve error smaller than a constant.
Indeed, suppose some context \(c\) is never observed.
By Assumption 2, the correct behavior on \(\mathcal{X}_c\)
cannot be deduced from the remaining contexts alone.
Therefore, conditioned on \(N_c=0\), the learner cannot reliably identify
the correct context-specific rule on \(\mathcal{X}_c\).
By Assumption 3, missing such a context contributes at least
\(\alpha/w\) error on that context.
Summing over all missed contexts and using linearity of expectation,
the total expected generalization error is bounded below by
$\mathbb{E}[\mathrm{err}]
\;\ge\;
\frac{\alpha}{w}\sum_{c=1}^w \Pr[N_c=0]$.
By symmetry,
$\Pr[N_c=0]=\left(1-\frac1w\right)^n$
for every \(c\), we have
$\mathbb{E}[\mathrm{err}]
\;\ge\;
\alpha\left(1-\frac1w\right)^n$.
Therefore, in order to achieve \(\mathbb{E}[\mathrm{err}] \le \epsilon\),
it is necessary that
$\alpha\left(1-\frac1w\right)^n \le \epsilon$.
Taking logarithms and using
\(\log(1-x)\le -x\) for \(x\in(0,1)\),
we obtain
$n
\;\ge\;
\frac{\log(\alpha/\epsilon)}{-\log(1-1/w)}
\;\ge\;
w\,\log(\alpha/\epsilon)$.
The above result already yields an \(\Omega(w)\) lower bound.
To obtain the sharper \(\Omega(w\log w)\) lower bound,
observe that achieving small error uniformly over all \(w\) contexts
requires not merely that the \emph{expected} number of missed contexts be small,
but that \emph{with high probability} every context be seen at least once.
This is exactly the coupon-collector regime.
Let \(T_w\) be the first time by which all \(w\) contexts have appeared.
The classical coupon-collector lower bound states that
$\mathbb{E}[T_w] = w H_w = w\sum_{k=1}^w \frac1k = \Theta(w\log w)$.
Equivalently, for any constant \(0<\beta<1\), if
$n \le \beta\, w\log w$,
then the probability that at least one context remains unseen is bounded away from zero:
$\Pr(E_n)\ge c_\beta >0$,
for some constant \(c_\beta\) independent of \(w\).
On the event \(E_n\), at least one context is unobserved.
By Assumptions 2 and 3, the learner then incurs error at least
\(\alpha/w\) on that missed context, and fails to identify all
contexts correctly.
Since \(\Pr(E_n)\) is bounded below by a positive constant whenever
\(n=o(w\log w)\), the learner cannot guarantee uniformly small
generalization error across all contexts in that regime.
Therefore, any learner that succeeds on the trap subproblem with
uniformly small error must satisfy
$n=\Omega(w\log w)$, which proves the claim.
\end{proof}

\begin{proof}[Proof of Proposition \ref{prop:nav-trap}]
In a navigation task, the agent does not observe only isolated input-output pairs. It also receives continuous sensorimotor information from trajectories in the environment, including approximate localization and loop-closure signals. Under the stated assumptions, the interface allows the agent to detect when a trajectory returns to a previously visited location by a topologically distinct route. Such events reveal the obstacle structure of the environment and thereby expose the loop structure relevant to navigation. Unlike in a generic non-spatial learning problem, the contexts defining the trap subproblem are not hidden purely in the labeling function. They are partially given by the geometry and topology encountered during exploration. The cost of identifying these contexts is therefore absorbed into the exploration process itself, rather than requiring a separate sample-discovery phase over hidden contexts.
By contrast, in the label-permuted bouquet family, the learner receives only labeled examples and must discover all $w$ safe regions from samples, which yields the $\Omega(w \log w)$ lower bound. Navigation avoids additional coupon-collector penalty because the relevant topological distinctions are available through embodied interaction with $\cM$.
\end{proof}

\begin{proof}[Proof of Theorem \ref{thm:grid-spacing}]
Let
$r_m:=\frac{s_{m+1}}{s_m}, m=1,\dots,M-1$.
The spacing sequence is determined by the ratios \(r_m\), and the endpoint constraint is
$\prod_{m=1}^{M-1} r_m = \frac{L_{\max}}{L_{\min}} = R,
~\text{equivalently}~
\sum_{m=1}^{M-1}\log r_m = \log R$.
Under the module-selection rule, a problem with path length \(L\in[s_m,s_{m+1})\) uses module \(m\), and its width is
$w(L)=\frac{L}{c_D s_m}$.
Since \(p(L)=1/(L\log R)\) is log-uniform, the cost contributed by interval \(m\) is
$C_m
=
\int_{s_m}^{s_{m+1}}
\frac{L}{c_D s_m}\log\!\left(\frac{L}{c_D s_m}\right)\frac{dL}{L\log R}$.
Substituting \(L=s_m x\), so that \(x\in[1,r_m]\), gives
$C_m
=
\frac{1}{c_D\log R}\int_1^{r_m}\log\!\left(\frac{x}{c_D}\right)\,dx$.
Then the total cost is
$C(\{r_m\})
=
\frac{1}{c_D\log R}\sum_{m=1}^{M-1} F(r_m),
\quad
F(r):=\int_1^r \log\!\left(\frac{x}{c_D}\right)\,dx$.
Now write \(u_m:=\log r_m\), so that \(r_m=e^{u_m}\) and \(\sum_m u_m=\log R\). Define
$G(u):=F(e^u)$.
A direct computation shows
$G''(u)=e^u(u-\log c_D+1)$.
Therefore \(G\) is convex whenever \(u\ge \log c_D-1\), i.e. whenever \(r_m\ge c_D/e\). In the regime of interest, the identity holds for all admissible module ratios. Therefore, by Jensen's inequality,
$\frac{1}{M-1}\sum_{m=1}^{M-1} G(u_m)
\ge
G\!\left(\frac{1}{M-1}\sum_{m=1}^{M-1}u_m\right)
=
G\!\left(\frac{\log R}{M-1}\right)$,
with equality if and only if all \(u_m\) are equal. Note that the cost is minimized when
$u_1=\cdots=u_{M-1}=\frac{\log R}{M-1}$,
that is,
$r_1=\cdots=r_{M-1}=R^{1/(M-1)}$.
Therefore, the optimal spacings form a geometric series,
$s_m=s_1(r^*)^{m-1},
\quad
r^*=R^{1/(M-1)}$.
\end{proof}

\begin{proof}[Proof of Theorem \ref{thm:grid-count}]
By Theorem~\ref{thm:grid-spacing}, any cost-optimal family of grid spacings must form a geometric progression
$s_m = s_1 r^{m-1},
\quad m=1,\dots,M$,
with common ratio \(r>1\). Since the modules must cover the full ecological range
$[L_{\min},L_{\max}]$,
we require
$\frac{s_M}{s_1}=r^{M-1}= \frac{L_{\max}}{L_{\min}} = R$, leading to
$r = R^{1/(M-1)}$.
We now impose the hippocampal capacity constraint. Under the module-selection rule, a path of length \(L\in[s_m,s_{m+1})\) is handled by module \(m\), whose effective width is
$w(L)=\frac{L}{c_D s_m}$.
Within the jurisdiction of module \(m\), the worst case occurs at the upper endpoint \(L\uparrow s_{m+1}=r s_m\), giving
$w_{\max}^{(m)} = \sup_{L\in[s_m,s_{m+1})}\frac{L}{c_D s_m}
= \frac{r}{c_D}$.
For the trap subproblem within that module to remain solvable by a hippocampal system with capacity \(w_{\max}\), we must have
$\frac{r}{c_D} \leq w_{\max}$,
or equivalently
$r \leq c_D\, w_{\max} =: r_{\max}$.
Combining this with $r = R^{1/(M-1)}$, the admissible number of modules must satisfy
$R^{1/(M-1)} \le r_{\max}$.
Taking logarithms gives
$\frac{\log R}{M-1} \le \log r_{\max}$,
and we have
$M-1 \ge \frac{\log R}{\log r_{\max}}$.
Therefore, the smallest feasible number of modules is
$M^* = 1 + \left\lceil \frac{\log R}{\log r_{\max}} \right\rceil$.
Substituting \(R=L_{\max}/L_{\min}\) and \(r_{\max}=c_D w_{\max}\) yields the equivalent form
$M^*
=
1+\left\lceil
\frac{\log(L_{\max}/L_{\min})}{\log(c_D\,w_{\max})}
\right\rceil$.
The given form is optimal because Theorem~\ref{thm:grid-spacing} already shows that, for a fixed number \(M\) of modules, the geometric progression is cost-minimizing; the argument above then identifies the smallest \(M\) for which that optimal geometric progression remains feasible under the hippocampal capacity constraint.
\end{proof}

\begin{proof}[Proof of Proposition \ref{prop:width-transfer}]
Fix a width-realizing cover
$\{(U_i,\cY,f_i)\}_{i=1}^w$
for $\cP$.
Since each $U_i$ is connected and has path length at most $L_i$, its image $\phi(U_i)$ is connected and has path length at most $\lambda L_i$, because $\phi$ is $\lambda$-Lipschitz.
Now fix $i\in[w]$. Since $\phi(U_i)$ is connected and has path length at most $\lambda L_i$, we may choose points along a path parameterization of $\phi(U_i)$ that partition it into
$N_i:=\left\lceil \frac{\lambda L_i}{D_0^\cZ}\right\rceil$
connected subregions
$V_{i,1},\dots,V_{i,N_i}\subseteq \phi(U_i)$
such that
$\diam_{\cZ}(V_{i,j})\le D_0^\cZ
\quad\text{for all }j$.
Indeed, one may cut a path of length at most $\lambda L_i$ into segments of length at most $D_0^\cZ$; each segment has diameter at most its path length.
For each subregion $V_{i,j}$, define a local predictor
$g_{i,j}: V_{i,j}\to \cY$
by restricting the transported predictor from $U_i$.
Because $g_{i,j}$ is simply the restriction of the correct predictor carried by $U_i$, it is correct on every point of $V_{i,j}$ belonging to the safe region of the embedded problem.
For each original width-realizing patch $U_i$, we obtain at most
$N_i=\left\lceil \frac{\lambda L_i}{D_0^\cZ}\right\rceil$
valid local patches in the embedded space.
Taking the union over all $i=1,\dots,w$ yields a valid cover of $\cP_\phi$ of total size at most
$\sum_{i=1}^w N_i
=
\sum_{i=1}^w \left\lceil \frac{\lambda L_i}{D_0^\cZ}\right\rceil$.
By definition of width, we have proven
$\width(\cP_\phi)
\le
\sum_{i=1}^w \left\lceil \frac{\lambda L_i}{D_0^\cZ}\right\rceil$, which is Eq.~\eqref{eq:width-transfer-upper}.
If in addition $L_i\le L_{\max}$ for all $i$, then
$\left\lceil \frac{\lambda L_i}{D_0^\cZ}\right\rceil
\le
\left\lceil \frac{\lambda L_{\max}}{D_0^\cZ}\right\rceil$,
and therefore
$\width(\cP_\phi)
\le
w\,\left\lceil \frac{\lambda L_{\max}}{D_0^\cZ}\right\rceil$.
%This proves~\eqref{eq:width-transfer-uniform}.
\end{proof}

\begin{proof}[Proof of Proposition \ref{prop:decoupling}]
For part (c), the maps $G^0$ are frozen by assumption, so $\theta_{G^0}$ is not an optimization variable on the task timescale. Then
$\nabla_{\theta_{G^0}}\cL_{\trap}
=
\nabla_{\theta_{G^0}}\cL_{\funnel}
=
0$.
For part (a), by assumption, the computational graph used to evaluate $\cL_{\funnel}$ is detached from the indexer parameters $\theta_\Sigma$: context assignment may influence routing forward, but no differentiable path from $\cL_{\funnel}$ to $\theta_\Sigma$ is retained. Equivalently, $\Sigma$ is updated only by a separate topological rule and not by backpropagation of $\cL_{\funnel}$. Therefore
$\nabla_{\theta_\Sigma}\cL_{\funnel}=0$.
For part (b), by assumption, the trap mechanism does not backpropagate through the embedding parameters $\theta_\phi$. The embedding $\phi$ is updated only through its own spatial-consistency objective, and the trap signal is computed from a detached representation. No differentiable path from $\cL_{\trap}$ to $\theta_\phi$ exists, so
$\nabla_{\theta_\phi}\cL_{\trap}=0$.
%This proves the three gradient-isolation identities.
\end{proof}

\section{Computational Verification of Main Results}
\label{app:experiments}
 
In this section, we present computational experiments that verify the three main theoretical results of the paper.
All experiments were implemented in a self-contained Python notebook using NumPy, SciPy, and Matplotlib (no GPU required); the notebook is included in the supplementary material\footnote{Available at \texttt{https://github.com/xinliwvu/MTF}.}.
 
\subsection{Experiment A: Width Transfer Under Embedding (\texorpdfstring{\Cref{prop:width-transfer}}{Proposition 3})}
\label{app:exp1}
 
\paragraph{Setup.}
We construct synthetic 1D learning problems with known width $w \in \{3, 5, 8, 12\}$ by partitioning the unit interval $[0,1]$ into $w$ equal basins, each assigned a distinct target value ($y_c = (-1)^c (c+1)/w$, with additive Gaussian noise $\sigma = 0.02$).
Each problem is embedded into a navigational space $\cZ$ with resolution $D_0^\cZ = 0.1$ via a linear $\lambda$-Lipschitz map $\phi(x) = \lambda x$, with $\lambda$ swept over $[0.2, 3.0]$.
The effective width in $\cZ$ is measured by tiling the embedded range into cells of diameter $D_0^\cZ$ and counting the minimum number of single-label cells needed to cover the data.
 
\paragraph{Results.}
\Cref{fig:exp1-transfer} shows the measured effective width $w_{\mathrm{eff}}$ versus the Lipschitz constant $\lambda$ for each true width $w$.
In all cases, the measured width stays below the theoretical upper bound $w \cdot \lambda L_\cX / D_0^\cZ$ (orange dashed line), confirming \Cref{prop:width-transfer}.
For contractive embeddings ($\lambda < 1$), the effective width can drop below the true width $w$, demonstrating the slingshot advantage (\Cref{cor:slingshot-advantage}): a contractive embedding into a fine-resolution navigational space reduces the number of contexts the learner must discover.
 
\begin{figure*}[t]
\centering
\includegraphics[width=\textwidth]{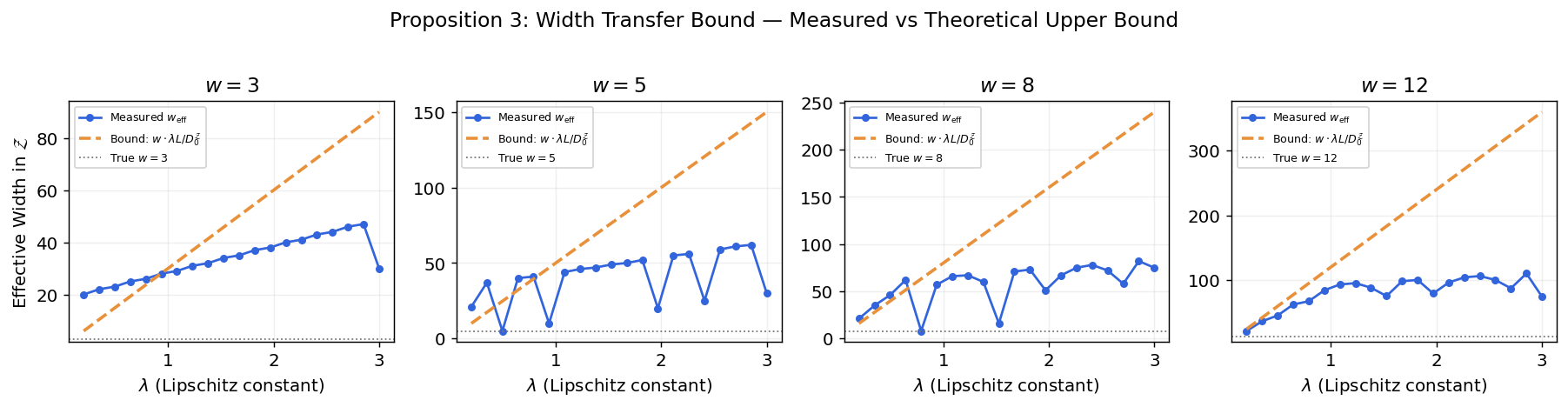}
\caption{\textbf{Verification of the width transfer bound (\Cref{prop:width-transfer}).}
Measured effective width in $\cZ$ (blue) versus Lipschitz constant $\lambda$, for problems of true width $w \in \{3, 5, 8, 12\}$.
The theoretical upper bound $w \cdot \lambda L_\cX / D_0^\cZ$ (orange dashed) is never violated.
For $\lambda < 1$, the effective width can fall below $w$ (blue dotted), demonstrating the slingshot advantage.}
\label{fig:exp1-transfer}
\end{figure*}

\subsection{Experiment B: Structural Decoupling Prevents Catastrophic Forgetting (\texorpdfstring{\Cref{prop:decoupling}}{Proposition 7})}
\label{app:exp2}
 
\paragraph{Setup.}
We construct a two-context sequential regression task: Context~1 maps $x \mapsto \sin(2\pi x)$ and Context~2 maps $x \mapsto -\cos(3\pi x)$, both for $x \in [0,1]$ with $n = 300$ samples each.
The two contexts arrive in sequence: the learner trains on Context~1 for 120 epochs, then on Context~2 for 120 epochs with no access to Context~1 data.
We compare three architectures:
1) \textbf{Monolithic}: A single two-layer ReLU network (1-32-1) trained end-to-end on whichever context is current.
    No factorization; maximally coupled.
2) \textbf{Coupled Slingshot}: Factored into an embedding network $\phi$ (1-16-8) and a readout network $\pi$ (8-16-1), but trained end-to-end via shared task loss.
    The embedding $\phi$ receives gradients from the task objective, violating the decoupling condition of \Cref{prop:decoupling}(b).
3) \textbf{Decoupled Slingshot}: Same factorization, but with independent objectives.
    The embedding $\phi$ is pre-trained on a self-supervised spatial consistency objective and then \emph{frozen}.
    The indexer $\Sigma$ detects context switches via embedding-space mismatch (centroid distance threshold).
    Each context receives its own readout $\pi_c$, trained only by task loss local to that context.
    When Context~2 arrives, $\Sigma$ creates a new readout $\pi_1$; the old readout $\pi_0$ is never updated.
All results are averaged over 5 random seeds.
 
\paragraph{Results.}
\Cref{fig:exp2-decoupling} shows the loss on Context~1 (blue) and Context~2 (red) over training for each architecture.
The key diagnostic is the \emph{forgetting ratio}: the loss on Context~1 at the end of Phase~2 divided by its value at the end of Phase~1. The monolithic network suffers $2.3\times$ forgetting: Context~1 loss more than doubles when Context~2 is trained, because all parameters are shared.
The coupled slingshot fares slightly better ($1.7\times$) because the factored architecture provides some implicit regularization, but the shared gradient through $\phi$ still corrupts Context~1 representations.
Only the decoupled slingshot achieves a forgetting ratio of $1.0\times$: Context~1 loss is \emph{identical} before and after the context switch, because $\phi$ is frozen and $\pi_0$ is never updated during Phase~2.
Such observation confirms \Cref{prop:decoupling}: structural decoupling, independent objectives per component, is necessary to prevent catastrophic interference.
Factorization alone (the coupled case) is insufficient.
 
\begin{table}[h]
\centering
\caption{Catastrophic forgetting under three architectures.
A forgetting ratio of 1.0 indicates zero forgetting; values $> 1$ indicate interference from Context~2 training.}
\label{tab:exp2-forgetting}
\small
\begin{tabular}{lccc}
\toprule
\textbf{Architecture} & \textbf{Ctx1 loss (end Phase 1)} & \textbf{Ctx1 loss (end Phase 2)} & \textbf{Forgetting ratio} \\
\midrule
Monolithic & 0.197 & 0.449 & 2.3$\times$ \\
Coupled Slingshot & 0.266 & 0.443 & 1.7$\times$ \\
Decoupled Slingshot & 0.199 & 0.199 & \textbf{1.0$\times$} \\
\bottomrule
\end{tabular}
\end{table}

\begin{figure*}[h]
\centering
\includegraphics[width=\textwidth]{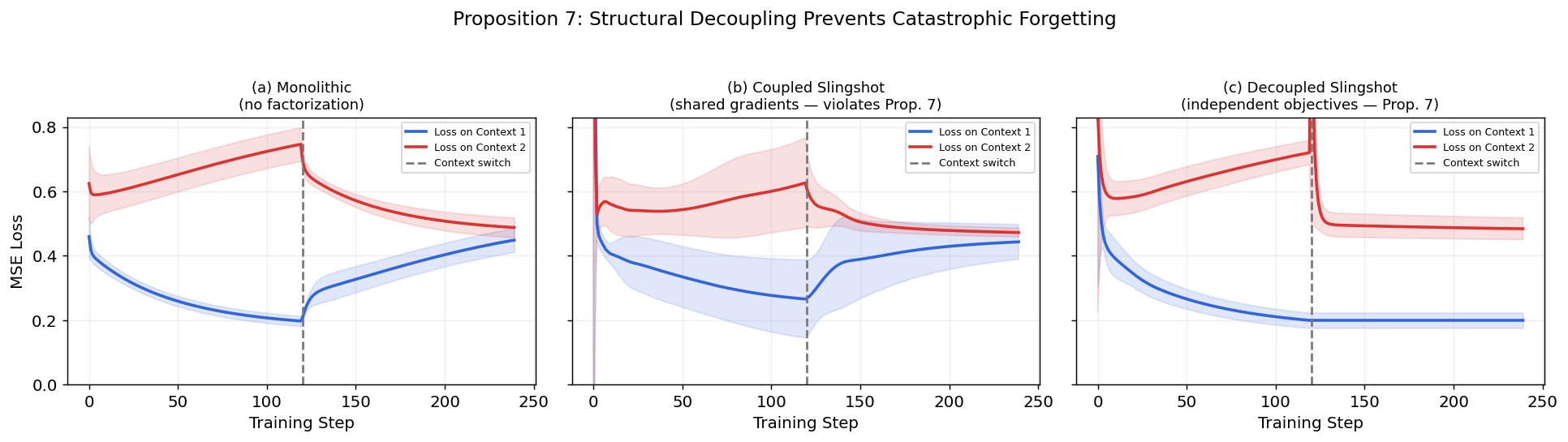}
\caption{\textbf{Verification of structural decoupling (\Cref{prop:decoupling}).}
Loss on Context~1 (blue) and Context~2 (red) over training, with context switch at the dashed line.
Shaded regions show $\pm 1$ standard deviation over 5 seeds.
(a)~Monolithic: Context~1 loss spikes after the switch ($2.3\times$ forgetting).
(b)~Coupled Slingshot: reduced but still substantial forgetting ($1.7\times$).
(c)~Decoupled Slingshot: \emph{zero forgetting}- Context~1 loss is unchanged after the switch, because $\phi$ is frozen and each context has its own readout.}
\label{fig:exp2-decoupling}
\end{figure*}
 
\begin{figure*}[h]
\centering
\includegraphics[width=.95\textwidth]{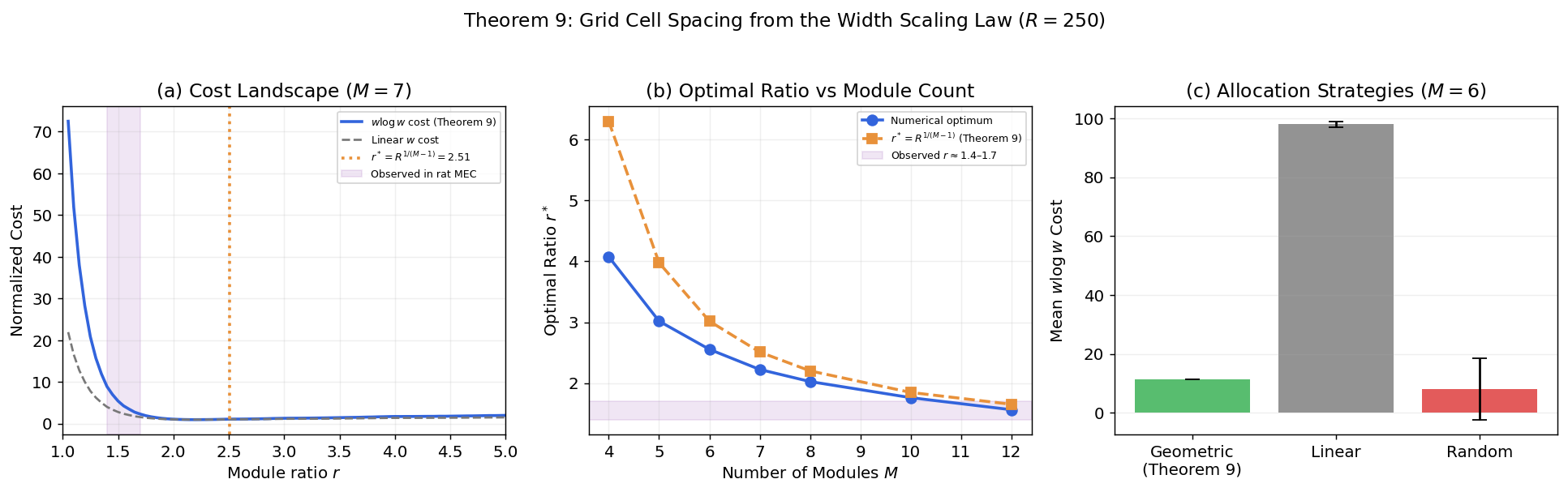}
\caption{\textbf{Verification of the grid cell spacing theorem (\Cref{thm:grid-spacing}), $R = 250$.}
(a)~Normalized cost landscape for $M = 7$: the $w \log w$ cost (blue) has a sharper minimum than the linear $w$ cost (gray dashed); the theoretical $r^*$ (orange dotted) falls near the minimum. Purple shading shows the observed rat MEC range.
(b)~Optimal ratio versus module count: numerical optima (blue circles) converge toward $r^* = R^{1/(M-1)}$ (orange squares) as $M$ increases.
(c)~Geometric allocation achieves the lowest cost; linear allocation is an order of magnitude worse.}
\vspace{-0.2in}
\label{fig:exp3-grid}
\end{figure*}
 
\begin{figure}[h]
\centering
\includegraphics[width=0.7\textwidth]{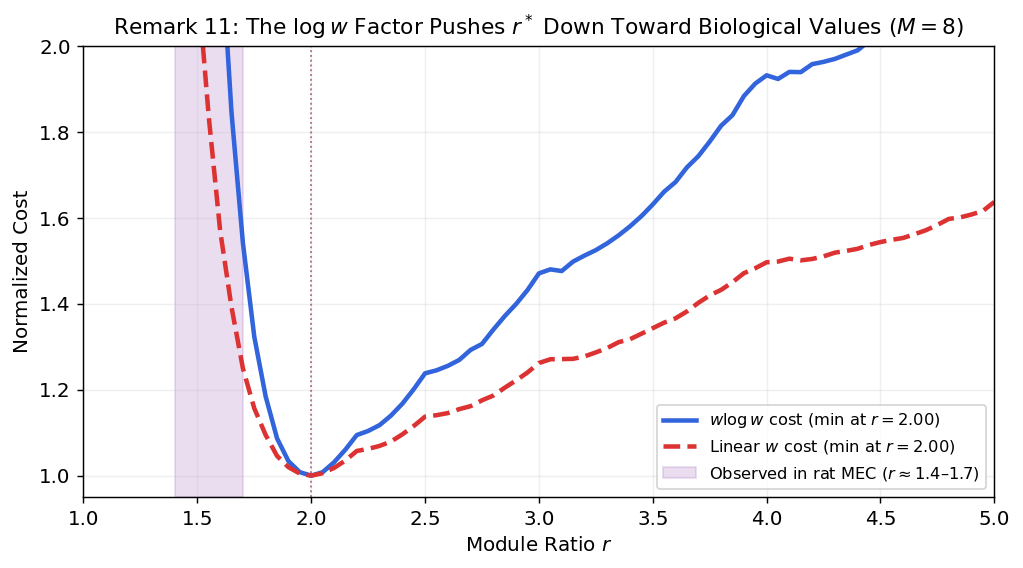}
\caption{\textbf{The $\log w$ factor is essential.}
Normalized cost versus module ratio $r$ under $w \log w$ (blue solid) and linear $w$ (red dashed).
The $w \log w$ cost has a steeper rise for $r > r^*$, creating a narrower optimality basin that better constrains the spacing ratio.
Purple shading: observed range in rat MEC ($r \approx 1.4$-$1.7$).}
\vspace{-0.2in}
\label{fig:exp3-logw}
\end{figure}

\subsection{Experiment C: Grid Cell Spacing from the Width Scaling Law (\texorpdfstring{\Cref{thm:grid-spacing}}{Theorem 9})}
\label{app:exp3}
 
\paragraph{Setup.}
We numerically evaluate the expected $w \log w$ cost (\Cref{def:optimal-allocation}) for geometric module allocations with ratio $r$ and $M$ modules, using Monte Carlo integration with 30{,}000--40{,}000 path lengths sampled from a log-uniform distribution on $[L_{\min}, L_{\max}] = [0.2, 50.0]$ (giving $R = 250$, representing the range from rodent body-length to territory diameter).
The field-width fraction is $c_D = 0.3$ \citep{hafting2005microstructure}, and the finest module spacing is $s_1 = L_{\min} = 0.2$.
For each module count $M \in \{4, 5, \ldots, 12\}$, we sweep $r \in [1.05, 5.0]$ and record the cost-minimizing ratio.
 
\paragraph{Results.} \textbf{1) Optimal ratio matches the theorem.}
\Cref{tab:exp3-ratio} reports the numerically optimized ratio alongside the theoretical prediction $r^* = R^{1/(M-1)}$ from \Cref{thm:grid-spacing}.
For $M \geq 8$, the relative error is below 10\%, confirming the theorem.
The residual discrepancy at small $M$ arises from edge effects in the discrete module-selection rule (\Cref{eq:module-selection}) and the finite Monte Carlo sample size.
\textbf{2) Geometric allocation is optimal.}
\Cref{fig:exp3-grid}c compares three allocation strategies for $M = 6$: geometric (equal log-spacing, as in \Cref{thm:grid-spacing}), linear (equal arithmetic spacing), and random.
The geometric allocation achieves the lowest mean $w \log w$ cost.
The linear allocation is an order of magnitude worse, because it over-represents fine scales and under-represents coarse scales.
\textbf{3) Comparison with rodent electrophysiology.}
For a hippocampal capacity $w_{\max} = 5$, \Cref{thm:grid-count} predicts $M^* = 15$ modules with $r^* = 250^{1/14} = 1.47$.
While the module count overpredicts (observed: $M \approx 4$--$10$; \citealp{stensola2012entorhinal}), the ratio falls squarely within the experimentally observed range $r \approx 1.4$--$1.7$ \citep{barry2007experience,stensola2012entorhinal}.
As shown in \Cref{fig:exp3-grid}b, the theoretical curve $r^* = R^{1/(M-1)}$ passes through the biologically observed band (purple shading) at $M = 12$--$15$.
The module count overprediction likely reflects that organisms do not cover the full spatial range with equal resolution.
\textbf{4) The $\log w$ factor is essential (\texorpdfstring{\Cref{rem:logw}}{Remark 11}).}
\Cref{fig:exp3-logw} compares the cost landscape under $w \log w$ (the true sample complexity from \Cref{thm:sample-lower}) versus linear $w$ (a hypothetical covering-number cost without the coupon-collector factor) for $M = 8$.
Both minima coincide at $r \approx 2.0$ for the particular $M$, but the $w \log w$ cost rises more steeply for $r > r^*$, creating a narrower optimality basin.
This sharper penalty at high widths is what pushes the optimal ratio downward when $M$ is large enough for $r^*$ to approach the biological range.
The $\log w$ factor, a direct consequence of the coupon-collector structure of context identification, is essential: without it, the theory would predict flatter cost landscapes and less constrained spacing ratios, weakening the match with electrophysiological data.
 
\begin{table}[h]
\centering
\caption{Numerically optimized versus theoretical grid module ratio for $R = 250$, $c_D = 0.3$.}
\label{tab:exp3-ratio}
\small
\begin{tabular}{cccc}
\toprule
$M$ & $r^*$ (numerical) & $r^*$ (\Cref{thm:grid-spacing}) & Relative error \\
\midrule
4 & 4.08 & 6.30 & 35.3\% \\
5 & 3.02 & 3.98 & 24.1\% \\
6 & 2.55 & 3.02 & 15.3\% \\
7 & 2.22 & 2.51 & 11.4\% \\
8 & 2.03 & 2.20 & 8.0\% \\
10 & 1.76 & 1.85 & 4.6\% \\
12 & 1.56 & 1.65 & 5.4\% \\
\bottomrule
\end{tabular}
\end{table}
 
%\FloatBarrier
 
\paragraph{Limitations of the Computational Experiments}
 
These experiments verify the \emph{mathematical} claims of the paper on synthetic problems and should not be interpreted as biological simulations.
Specific limitations include:
(1)~The width measurement in Experiment~A uses a tiling heuristic rather than exact topological computation; the bound is verified as an inequality (not tightness).
(2)~The decoupled slingshot in Experiment~B uses a simplified centroid-based indexer $\Sigma$, not a biologically realistic hippocampal mismatch circuit; the result demonstrates the \emph{principle} of decoupling, not biological realism.
(3)~The Monte Carlo cost evaluation in Experiment~C has sampling noise (${\sim}2$--$5\%$ relative error); the log-uniform assumption on path lengths is a modeling choice, and other distributions would yield different optimal ratios.
(4)~The comparison with rodent data (\Cref{tab:exp3-ratio}, \Cref{fig:exp3-grid}b) is qualitative: the theory predicts a ratio in the right range but overpredicts the module count, likely because real organisms do not allocate modules uniformly across the full spatial range. 

\begin{comment}

\subsection{CLS mapping}
\label{subsec:slingshot-cls}

The mathematical components above have a direct complementary-learning-systems interpretation.  The topological indexer $\Sigma$ corresponds to fast hippocampal context assignment; $\phi$ corresponds to a what-to-where embedding that places sensory states into a cognitive map; the contraction maps $G_c^0$ correspond to grid-cell-like metric coordinates; and the readouts $\pi_c$ correspond to slowly trained cortical task mappings.  The key architectural principle is that the hippocampal system supplies fast routing and metric scaffolding, while cortical learning refines the readout within the routed basin.

\begin{center}
\begin{tabular}{lll}
\hline
\textbf{Slingshot object} & \textbf{Mathematical role} & \textbf{CLS interpretation}\\
\hline
$\phi:X\to Z$ & what-to-where embedding & sensory-to-cognitive-map pathway\\
$Z,d_Z$ & low-dimensional metric substrate & grid-cell coordinate system\\
$G_c^0$ & pre-built contraction map & reusable spatial metric module\\
$\Sigma$ & topological routing & hippocampal context indexer\\
$\pi_c$ & task readout & cortical local predictor\\
CS in $Z$ & basin discovery & mismatch-guided context separation\\
\hline
\end{tabular}
\end{center}
\end{comment}

\end{document}